\documentclass{ifacconf}

\usepackage{graphicx}
\usepackage{natbib}
\usepackage{amsmath,amssymb,mathtools}
\usepackage{bm}
\usepackage{subfig}
\usepackage{booktabs}
\usepackage{multicol}
\usepackage{multirow}

\DeclarePairedDelimiter{\norm}{\lVert}{\rVert}

\newtheorem{property}{Property}
\newtheorem{theorem}{Theorem}
\newtheorem{assumption}{Assumption}

\newtheorem{lemma}{Lemma}

\newtheorem{proof}{Proof}

\begin{document}
\begin{frontmatter}

\title{Neural Network-Based Adaptive Event-Triggered Control for Dual-Arm Unmanned Aerial Manipulator Systems \thanksref{footnoteinfo}} 

\thanks[footnoteinfo]{This work was supported in part by National Natural Science Foundation of China under Grant 62273187, Grant 62233011, and Grant U25A200563, and in part by the Key Technologies R \& D Program of Tianjin under Grant 23YFZCSN00060. \emph{(Corresponding author: Xiao Liang)} }



\author[1,2]{Yang Wang}
\author[1,2]{Hai Yu}
\author[1,2]{Wei He}
\author[1,2]{Jianda Han}
\author[1,2]{Yongchun Fang}
\author[1,2]{Xiao Liang}

\address[1]{Institute of Robotics and Automatic Information System, College of Artificial Intelligence, and Tianjin Key Laboratory of Intelligent     Robotics, Nankai University, Tianjin 300350, China.}
\address[2]{Institute of Intelligence Technology and Robotic Systems, Shenzhen Research Institute of Nankai University, Shenzhen 518083, China. (e-mail: \{wangy1893, yuhai, howei\}@mail.nankai.edu.cn, \{hanjianda, fangyc, liangx\}@nankai.edu.cn)}

\begin{abstract}                
This paper investigates the control problem of dual-arm unmanned aerial manipulator systems (DAUAMs). Strong coupling between the dual-arm and the multirotor platform, together with unmodeled dynamics and external disturbances, poses significant challenges to stable and accurate operation. An adaptive event-triggered control scheme with neural network-based approximation is proposed to address these issues while explicitly considering communication constraints. First, a dynamic model of the DAUAM system is derived, and a command-filter-based backstepping framework with error compensation is constructed. Then, a neural network is employed to approximate external frictions, and an event-triggered mechanism is designed to reduce the transmission frequency of control updates, thereby alleviating communication and energy burdens. Lyapunov-based analysis shows that all closed-loop signals remain bounded and that the tracking error converges to a neighborhood of the desired trajectory within a fixed time. Finally, experiments on a self-built DAUAM platform demonstrate that the proposed approach achieves accurate trajectory tracking.
\end{abstract}

\begin{keyword}
Dual-arm unmanned aerial manipulator systems, neural network, event-triggered, backstepping.
\end{keyword}

\end{frontmatter}

\section{Introduction}
Unmanned aerial vehicles (UAVs) have revolutionized operations in three-dimensional space, offering unparalleled maneuverability, hovering capability, and flexibility. These attributes have enabled their widespread adoption across diverse applications, including industrial inspection, security surveillance, and environmental monitoring (\cite{uav1, uav2}). Building upon these advantages, the integration of robotic manipulators has transformed UAVs into unmanned aerial manipulators (UAMs). This evolution empowers them to perform active physical interactions with the environment, facilitating tasks such as scientific sampling, transportation of hazardous materials, aerial maintenance, and contact-based inspections like wall sensing or high-altitude cleaning (\cite{uam1, uam2, uam3}).

Driven by advancements in unmanned systems, research has progressively shifted towards dual-arm UAM configurations. These systems synergize the extensive operational workspace of dual-arm robots with the vertical take-off and landing capabilities of UAVs, demonstrating superior performance over single-arm counterparts in complex, interactive domains such as search and rescue, object manipulation, and aerial assembly. Recent contributions highlight this trend: \cite{dual1} develope a lightweight, low-inertia dual-arm manipulator with a center-of-gravity balancing mechanism, utilizing prismatic joints to extend the workspace and minimize stability impact on the UAV. \cite{dual2} introduce a novel image-based visual-impedance control scheme, enabling effective physical interaction for a DAUAM equipped with vision and force/torque sensors. Focusing on system design, \cite{dual3} create a flexible DAUAM with a low-weight, low-inertia humanoid arm structure. Furthermore, \cite{dual5} establish a bilateral teleoperation system using a leader-follower configuration, allowing an operator to control a DAUAM intuitively, with validation conducted in ground, in-flight, and cable-suspended scenarios. In a specialized application, \cite{dual4} presente an aerial-deployable dual-arm system for power line maintenance, where the arms are deployed as a rolling robot to improve task accuracy and reduce line load. 

The primary challenge associated with DAUAMs lies in the inherent coupling between the arm motions and the multirotor platform. Fast arm movements inevitably disturb the vehicle's attitude and position, and these effects cannot be directly compensated because they depend on joint velocities and accelerations that are typically unmeasured. Moreover, unmodeled dynamics and external disturbances in real-world environments introduce additional uncertainties and risks to the system. These factors make it nontrivial to design controllers that ensure accurate trajectory tracking and safe operation for transportation tasks.

To cope with such nonlinear and uncertain dynamics, the backstepping technique has been widely employed as an effective controller-design framework, and has achieved considerable success in system analysis and control synthesis (\cite{bs1,bs2}). However, the repetitive differentiation of virtual control inputs in the backstepping procedure gives rise to the well-known explosion of complexity problem. To alleviate this issue, dynamic surface control was first proposed in \cite{DynamicSurface} by incorporating a first-order filter into the backstepping design. Nonetheless, the presence of filter-induced errors may still degrade system performance. In \cite{CommandFilter}, a command-filter-based backstepping scheme with an explicit error-compensation mechanism is developed to mitigate these adverse effects. On the other hand, from the perspective of communication efficiency, continuously sampling and transmitting control signals to onboard actuators in real time can lead to substantial communication and energy costs. Thus, how to reduce the communication burden while preserving satisfactory control performance becomes a critical problem that must be carefully addressed.

To address the abovementioned challenges, this paper proposes an adaptive event-triggered control scheme for DAUAMs with neural network (NN) approximation. The main contributions of this paper are summarized as follows:
\begin{enumerate}
\item A novel switching-function-based command-filtered backstepping framework is constructed to handle the challenges brought by the nonlinear and strongly coupled dynamics of the DAUAMs. 
By appropriately designing the switching function, the error-compensation term is rendered continuous, 
so that the virtual control laws remain differentiable and the validity of the command filter is guaranteed.

\item To cope with the time-varying external frictions and limited communication resources in DAUAMs, a neural network-based event-triggered mechanism is further integrated into the controller. 
Notably, the neural network adaptively estimates the unknown external friction forces, while the event-triggered update rule reduces the communication burden. 
Rigorous Lyapunov analysis shows that all closed-loop signals are uniformly bounded and that the tracking error converges to a small neighborhood of the desired trajectory within a fixed finite time.
\end{enumerate}

The remainder of this paper is organized as follows. Section~\ref{sec:mod} introduces the system model and provides the theoretical basis for the subsequent controller design. Section~\ref{sec:controller} presents the detailed procedure for controller synthesis, while Section~\ref{sec:stability} carries out the corresponding stability analysis. In Section~\ref{sec:exp}, experiments on a self-built DAUAM platform are conducted to validate the proposed method. Finally, Section~\ref{sec:con} concludes the paper and outlines directions for future work.

\section{System Modeling and Normalization}\label{sec:mod}

\begin{figure}[htbp]
\centering
\includegraphics[width=3.3in]{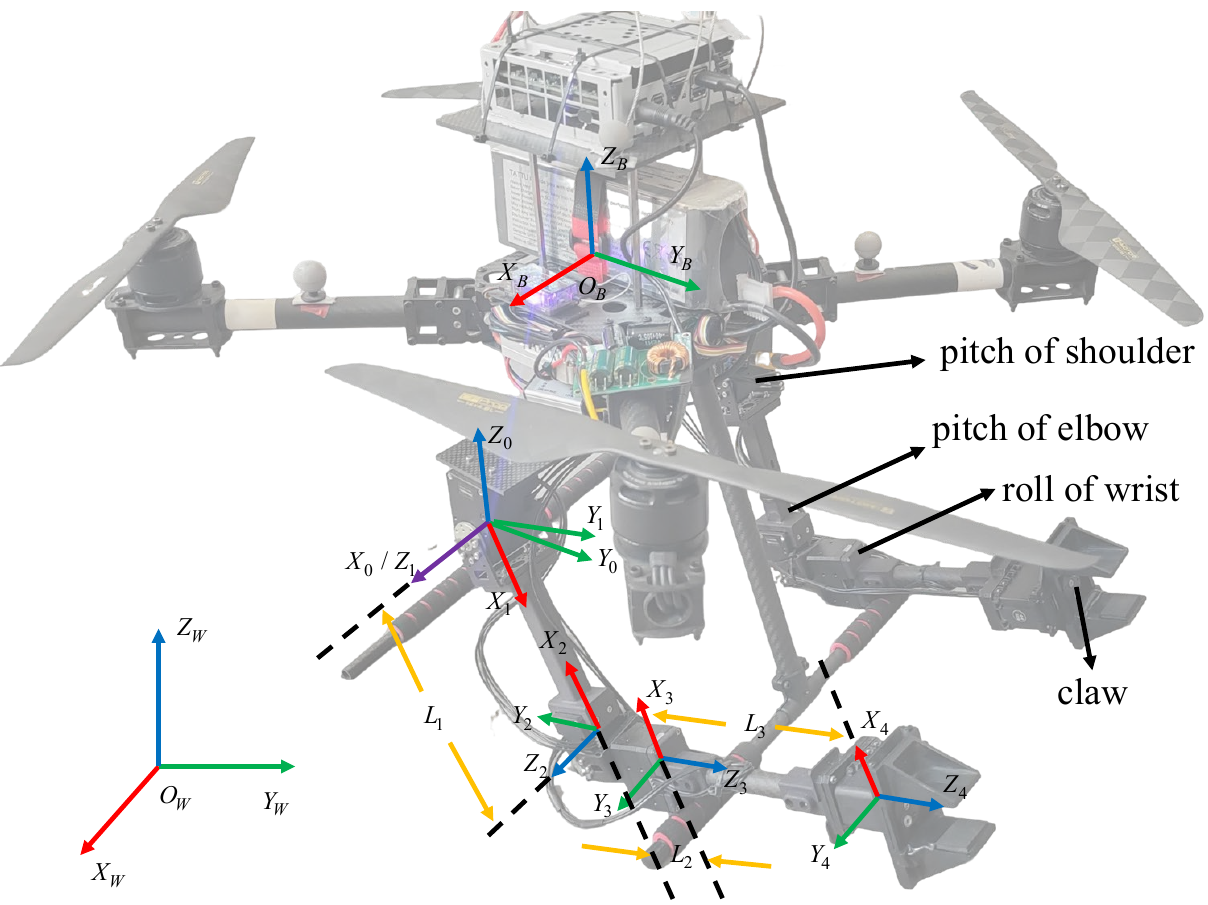}
\caption{Illustration of DAUAM system.}
\label{fig:dualarm}
\end{figure}

Fig.~\ref{fig:dualarm} illustrates the DAUAM considered in this work. Three types of coordinate frames are defined: the right-handed inertial frame $\mathcal{F}_W$, the multirotor body-fixed frame $\mathcal{F}_B$, and the joint frames $\mathcal{F}_i^j(X_i^j$-$Y_i^j$-$Z_i^j)$ of the dual-arm, where $i = 0,1,2,3,4$ indexes the links and $j = 1,2$ denote the right and left arm, respectively. The length of link $i$ is represented by $L_i$ $(i = 1,2,3)$. Each arm provides three actuated degrees of freedom: shoulder pitch, elbow pitch, and elbow roll, together with a claw mounted at the end-effector. The center of mass translational dynamics of the DAUAM is (\cite{model})
\begin{equation}
m_t \ddot {\bm p} + m_t g \bm e_3 + \bm F_m +\bm F_f(\dot{\bm p})+ \bm F_d = \bm U_c,
\end{equation}
where $m_t$ denotes the total mass of the system, 
$\bm p = [p_x, p_y, p_z]^\top \in \mathbb{R}^3$ is the position of the multirotor with respect to $\mathcal{F}_W$, 
$\bm e_3 = [0, 0, 1]^\top$ is the unit vector along the vertical axis, 
$\bm F_m(t) \in \mathbb{R}^3$ represents the  force exerted on the multirotor by the dual-arm, 
$\bm F_f(\dot{\bm p}) \in \mathbb{R}^3$ denotes frictional forces, 
$\bm F_d(t) \in \mathbb{R}^3$ collects general unknown nonlinear disturbances (e.g., unmodeled dynamics and parameter variations), 
$\bm U_c(t) = [U_{cx}, U_{cy}, U_{cz}]^\top \in \mathbb{R}^3$ is the control input generated by the multirotor.
Define 
\begin{align}
    &\bm u \triangleq \bm U_c/m_t - g \bm e_3,\\
    &\Delta(\bm p,\dot{\bm p},t) \triangleq -\left(\bm F_d + \bm F_m \right)/{m_t},\\
    & \bm f \triangleq -\bm F_f (\dot{\bm p})/{m_t},
\end{align}
yielding
\begin{align}\label{mod}
  \begin{cases}
  \dot{\bm p} &= \bm v,\\ 
  \dot{\bm v} &= \bm u + \Delta(\bm p,\bm v,t) + \bm f(\dot{\bm p}),
  \end{cases}
\end{align}
and the nonlinear function $\bm f(\dot{\bm p})$ will be approximated using a neural network based on the desired velocity 
$\dot{\bm p}_d$ in the following section.

\begin{assumption}
  There exist known bounds $\bar\Delta$ such that (\cite{model})
  \begin{align}
    \| \Delta(\bm p,\bm v,t)\| \le \bar\Delta.
  \end{align}
\end{assumption}

\begin{assumption}\label{ass:f-Lip}
The unknown nonlinear function $\bm f(\dot{\bm p})$ is locally Lipschitz with respect to $\dot{\bm p}$. 
That is, for any compact set $\Omega \subset \mathbb R^3$ containing the closed-loop trajectories, there exists a constant $L_f>0$ such that (\cite{Lip})
\begin{equation}\label{eq:f-Lip}
  \bigl\| f(\bm \nu_1) - f(\bm \nu_2) \bigr\|
  \le L_f \bigl\|\bm \nu_1 - \bm \nu_2 \bigr\|,\quad
  \forall\, \bm \nu_1,\bm \nu_2 \in \Omega.
\end{equation}
\end{assumption}

\begin{lemma}
  For $\bm s\in\mathbb R^3$, $p\in(0.5,1)$, $\epsilon>0$, design the smooth switch map:
  \begin{equation}
    \Theta(\bm s)=
    \begin{cases}
      \dfrac{\bm s}{\norm{\bm s}^{2(1-p)}}, & \norm{\bm s}^2>\epsilon,\\[6pt]
      \sin^2\!\big(\tfrac{\norm{\bm s}^2\pi}{2\epsilon}\big)\dfrac{\bm s}{\norm{\bm s}^{2(1-p)}}, & \norm{\bm s}^2\le\epsilon,
    \end{cases}
  \end{equation}
  which is continuous and satisfies that if $\ddot{\bm s}$ exists, then $\Theta(\bm s)$ is twice differentiable. In addition, there exist positive constants $k_{\Theta}, c_{\Theta}$, which make $\Theta(\bm s)$ satisfy the following inequality:
  \begin{align}\label{theta-bound}
    \| \Theta(\bm s) \| \le k_{\Theta} \| \bm s \| + c_{\Theta}.
  \end{align}
\end{lemma}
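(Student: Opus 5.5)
The plan is to write $\Theta(\bm s)=\phi(\norm{\bm s}^2)\,\bm s\,\norm{\bm s}^{-2(1-p)}$, with a scalar switching weight $\phi(r)=1$ for $r>\epsilon$ and $\phi(r)=\sin^2(r\pi/(2\epsilon))$ for $0\le r\le\epsilon$. Each claim then reduces to a one-dimensional statement about $r=\norm{\bm s}^2$.

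\emph{Continuity.} Away from $\bm s=0$, both branches are compositions of smooth functions, since $\norm{\bm s}^{-2(1-p)}$ is smooth on $\mathbb R^3\setminus\{0\}$. On the sphere $\norm{\bm s}^2=\epsilon$ we have $\sin^2(\pi/2)=1$, so the two branches agree.

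\emph{Behaviour at the origin.} Here $\norm{\Theta(\bm s)}=\phi(r)\,r^{p-1/2}\le r^{p-1/2}\to0$, because $p>0.5$. Hence setting $\Theta(0)=0$ makes $\Theta$ continuous on all of $\mathbb R^3$.

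\emph{Differentiability at the switching surface.} Note that $\phi'(r)=\frac{\pi}{2\epsilon}\sin(r\pi/\epsilon)$ vanishes at $r=\epsilon$. So $\phi$ is $C^1$ across $r=\epsilon$, and $\Theta$ is $C^1$ there by the chain rule.

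\emph{Differentiability near the origin.} Use $\sin^2 x\le x^2$, which gives $\phi(r)\le (\pi/(2\epsilon))^2 r^2$ and $\phi'(r)\le (\pi^2/(2\epsilon^2))\,r$ near $0$. The Jacobian of $\Theta$ consists of terms such as $\phi(r)r^{p-1}I$, $\phi(r)r^{p-2}\bm s\bm s^\top$ and $\phi'(r)r^{p-1}\bm s\bm s^\top$. Each of these is bounded by a constant times $r^{p+1}$ or $r^{p}$, so each tends to $0$. A similar count shows the second derivatives are $O(r^{p})$ or $O(r^{p-1/2})$, which also vanish since $p>0.5$. Thus $\Theta$ is twice differentiable in $\bm s$ at $0$.

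\emph{Time derivatives along trajectories.} The statement "if $\ddot{\bm s}$ exists then $\Theta$ is twice differentiable" is read along a trajectory $\bm s(t)$. By the chain rule, $\dot\Theta=D\Theta\,\dot{\bm s}$ and $\ddot\Theta=D^2\Theta[\dot{\bm s},\dot{\bm s}]+D\Theta\,\ddot{\bm s}$, and both are well defined once $\Theta$ is $C^2$ as a map.

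\emph{The main obstacle.} The delicate point is the switching surface itself. There $\phi''(\epsilon)=\frac{\pi^2}{2\epsilon^2}\cos\pi=-\pi^2/(2\epsilon^2)$, whereas the outer branch has $\phi''\equiv0$. So the second derivative of $\Theta$ generically jumps across $\norm{\bm s}^2=\epsilon$. I expect this to be the hard step, and I would handle it in one of two ways:
\begin{itemize}
\item interpret the twice-differentiability claim piecewise, i.e.\ $C^1$ globally with $\ddot\Theta$ existing off the measure-zero switching set, which is what the command-filter argument actually needs; or
\item if a genuine $C^2$ statement is required, note that it would need a switching weight with $\phi''(\epsilon)=0$, for example $\sin^4$ or a higher-order polynomial blend.
\end{itemize}
I would present the piecewise interpretation and flag the second point.

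\emph{The linear bound \eqref{theta-bound}.} For $\norm{\bm s}^2>\epsilon$ we have $\norm{\Theta}=\norm{\bm s}^{2p-1}$. Since $2p-1\in(0,1)$, Young's inequality gives $\norm{\bm s}^{2p-1}\le(2p-1)\norm{\bm s}+(2-2p)$. For $\norm{\bm s}^2\le\epsilon$ we have $\norm{\Theta}\le\epsilon^{p-1/2}$. So one can take $k_\Theta=1$ and $c_\Theta=\max\{1,\epsilon^{p-1/2}\}$.
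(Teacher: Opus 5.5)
The paper states this lemma without proof, so there is no argument to compare yours against. Judged on its own, your treatment is correct on everything the lemma can actually support:
\begin{itemize}
\item the continuity check, including the need to define $\Theta(\bm 0)=\bm 0$ since the formula is $0/0$ there;
\item the $C^1$ matching via $\phi'(\epsilon)=\tfrac{\pi}{2\epsilon}\sin\pi=0$;
\item the $C^2$ regularity at the origin.
\end{itemize}
Your second-derivative orders at the origin are loose. Every term is in fact $O(r^{p+1/2})$, because the terms containing $\phi''(0)\neq 0$ carry three factors of $\bm s$. So the vanishing comes from your weight $\phi$ rather than from $p>\tfrac12$; the conclusion is unaffected. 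The linear bound can be shortened: since $\sin^2\le 1$, $\|\Theta(\bm s)\|\le\|\bm s\|^{2p-1}\le\|\bm s\|+1$ for all $\bm s$, so $k_\Theta=c_\Theta=1$ works without splitting into cases.

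Your diagnosis at the switching sphere is correct and exposes a real defect in the lemma as stated. On both sides of $r=\epsilon$ we have $\phi=1$ and $\phi'=0$, while $\phi''$ jumps from $-\tfrac{\pi^2}{2\epsilon^2}$ to $0$. Hence along any trajectory crossing $\|\bm s\|^2=\epsilon$ with $\bm s^\top\dot{\bm s}\neq 0$, the one-sided second derivatives of $\Theta(\bm s(t))$ differ by $\tfrac{2\pi^2}{\epsilon^2}(\bm s^\top\dot{\bm s})^2\epsilon^{p-1}\bm s\neq\bm 0$. So $\Theta(\bm s(t))$ fails to be twice differentiable at that instant even though $\ddot{\bm s}$ exists.

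Your proposed repair with $\sin^4$, however, does not work. For every $m$, $\sin^{2m}(\tfrac{\pi}{2}-u)=1-mu^2+O(u^4)$, so the weight $\sin^{2m}(\tfrac{r\pi}{2\epsilon})$ has second derivative $-\tfrac{m\pi^2}{2\epsilon^2}\neq 0$ at $r=\epsilon$. No power of the sine flattens the peak to second order.

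A weight that does work is $1-\cos^4(\tfrac{r\pi}{2\epsilon})=\sin^2(\tfrac{r\pi}{2\epsilon})\bigl(1+\cos^2(\tfrac{r\pi}{2\epsilon})\bigr)$:
\begin{itemize}
\item near the sphere it equals $1-\sin^4 u$ with $u=\tfrac{\pi}{2}-\tfrac{r\pi}{2\epsilon}$, so it matches the outer branch to third order;
\item near the origin it is at most $2(\tfrac{r\pi}{2\epsilon})^2$, so your origin estimates carry over with only the constants changed;
\item it stays in $[0,1]$, so both the linear bound and the later estimate $\bm s^\top\Theta(\bm s)\ge(\bm s^\top\bm s)^p-\epsilon^p$ survive.
\end{itemize}
The quintic blend $6t^5-15t^4+10t^3$ with $t=r/\epsilon$ also works, whereas the cubic $3t^2-2t^3$ does not.

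If you keep the piecewise reading instead, note that Property~1 is imported for twice-differentiable $\bm\alpha_1$. Your remark that piecewise regularity is ``what the command-filter argument actually needs'' would then have to be justified by re-deriving the filter bound with $\ddot{\bm\alpha}_1$ only essentially bounded. With a corrected weight the lemma holds as stated and that extra work is unnecessary.
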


\begin{lemma}
Let $q_1,q_2\in\mathbb{R}$ and let $\nu_1,\nu_2,\nu_3>0$ be given constants, then the following inequality holds:
\begin{equation}
  \lvert q_1\rvert^{\nu_1}\lvert q_2\rvert^{\nu_2}
  \le
  \frac{\nu_1}{\nu_1+\nu_2}\,\nu_3\,\lvert q_1\rvert^{\nu_1+\nu_2}
  \;+\;
  \frac{\nu_2}{\nu_1+\nu_2}\,\nu_3^{-\frac{\nu_1}{\nu_2}}\,
  \lvert q_2\rvert^{\nu_1+\nu_2}.
\end{equation}
\end{lemma}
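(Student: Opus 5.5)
This is the weighted Young inequality. The plan is to reduce it to the weighted AM--GM inequality $a^{\theta}b^{1-\theta}\le \theta a+(1-\theta)b$ for $a,b\ge 0$ and $\theta\in(0,1)$. That inequality follows from concavity of $\ln$ when $a,b>0$, and is trivial when $a$ or $b$ vanishes.

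\textbf{Normalisation.} Set $r=\nu_1+\nu_2$ and $\theta=\nu_1/r$, so that $1-\theta=\nu_2/r$. If $q_1=0$ or $q_2=0$, the left-hand side is zero while the right-hand side is nonnegative, and there is nothing to prove. Otherwise, insert the weight $\nu_3$ by writing
\begin{equation*}
|q_1|^{\nu_1}|q_2|^{\nu_2}
=\big(\nu_3^{r/\nu_1}|q_1|^{r}\big)^{\theta}\,\big(\nu_3^{-r/\nu_2}|q_2|^{r}\big)^{1-\theta}.
\end{equation*}
The powers of $\nu_3$ combine to $\nu_3^{(r/\nu_1)\theta}\nu_3^{-(r/\nu_2)(1-\theta)}=\nu_3^{1}\nu_3^{-1}=1$, so this identity holds. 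One might instead try placing $\nu_3$ directly as the first factor's coefficient, but the exponents must be arranged so that the first term comes out with coefficient $\nu_3$ after applying AM--GM.

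\textbf{Choosing the weights.} We want $a=\nu_3|q_1|^r$ and $b=\nu_3^{-\nu_1/\nu_2}|q_2|^r$. Check that $a^{\theta}b^{1-\theta}=|q_1|^{\nu_1}|q_2|^{\nu_2}$. The $\nu_3$-exponent is $\theta-\frac{\nu_1}{\nu_2}(1-\theta)=\frac{\nu_1}{r}-\frac{\nu_1}{\nu_2}\cdot\frac{\nu_2}{r}=0$. The $|q|$-exponents are $r\theta=\nu_1$ and $r(1-\theta)=\nu_2$. So the identity holds with this choice, which replaces the one in the normalisation step.

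\textbf{Conclusion.} Applying weighted AM--GM then gives exactly
\begin{equation*}
\frac{\nu_1}{r}\,\nu_3|q_1|^{r}+\frac{\nu_2}{r}\,\nu_3^{-\nu_1/\nu_2}|q_2|^{r},
\end{equation*}
which is the claimed bound.

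The only real obstacle is bookkeeping: choosing $a$ and $b$ so that the exponents of $\nu_3$ cancel. This is verified by the computation in the second step. Everything else reduces to the concavity of the logarithm, $\ln(\theta a+(1-\theta)b)\ge\theta\ln a+(1-\theta)\ln b$.
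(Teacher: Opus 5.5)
Your argument is correct: with $\theta=\nu_1/(\nu_1+\nu_2)$, $a=\nu_3|q_1|^{\nu_1+\nu_2}$ and $b=\nu_3^{-\nu_1/\nu_2}|q_2|^{\nu_1+\nu_2}$, the weighted AM--GM inequality $a^{\theta}b^{1-\theta}\le\theta a+(1-\theta)b$ gives exactly the claimed bound. This is the standard weighted Young argument; the paper states the lemma without proof and relies on it in the same form. The identity in your ``normalisation'' step is never used, so delete it, since the choice of $a$ and $b$ in your second step replaces it.
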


\begin{lemma}
Let $0<q\le 1$, $v_i\in\mathbb{R}$, and $w_i\ge 0$ for $i=1,\dots,n$, then the following inequalities hold:
\begin{equation}
  \left(\sum_{i=1}^{n}\lvert v_i\rvert\right)^{q}
  \le
  \sum_{i=1}^{n}\lvert v_i\rvert^{q},
  \;
  \left(\sum_{i=1}^{n} w_i\right)^{2}
  \le
  n \sum_{i=1}^{n} w_i^{2}.
\end{equation}
\end{lemma}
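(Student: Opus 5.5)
The two inequalities are independent, so I will prove them separately; both follow from elementary convexity or concavity arguments. Throughout, the case where all $v_i=0$ (resp. all $w_i=0$) is trivial, so I will assume the sums are positive.

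\textbf{First inequality.} Set $S=\sum_{i=1}^n |v_i|>0$ and $a_i=|v_i|/S\in[0,1]$, so that $\sum_i a_i=1$. Since $0<q\le 1$ and $a_i\in[0,1]$, we have $a_i^{q}\ge a_i$, because $t\mapsto t^{q}$ lies above the identity on $[0,1]$. Summing gives $\sum_i a_i^{q}\ge \sum_i a_i=1$. Multiplying by $S^{q}$ yields $\sum_i|v_i|^{q}\ge S^{q}$, which is the claim. An alternative route is to use the subadditivity of the concave function $t\mapsto t^{q}$ with $0^{q}=0$, applied inductively on $n$. I will present the normalization argument because it is one line.

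\textbf{Second inequality.} I will apply the Cauchy--Schwarz inequality in $\mathbb{R}^n$ to the vectors $(1,\dots,1)$ and $(w_1,\dots,w_n)$:
\begin{equation*}
\Big(\sum_{i=1}^n 1\cdot w_i\Big)^2\le \Big(\sum_{i=1}^n 1^2\Big)\Big(\sum_{i=1}^n w_i^2\Big)=n\sum_{i=1}^n w_i^2 .
\end{equation*}
Equivalently, one can expand
\begin{equation*}
n\sum_i w_i^2-\Big(\sum_i w_i\Big)^2=\tfrac12\sum_{i,j}(w_i-w_j)^2\ge 0 .
\end{equation*}
Note that the nonnegativity of $w_i$ is not actually needed here.

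Neither step poses a real obstacle. The only point requiring care is the restriction $q\le 1$, which is exactly what makes $a_i^{q}\ge a_i$ hold on $[0,1]$.
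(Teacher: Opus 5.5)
Your proof is correct and complete. The normalization $a_i=|v_i|/S$ together with $a_i^{q}\ge a_i$ on $[0,1]$ settles the first inequality. Cauchy--Schwarz against $(1,\dots,1)$ settles the second, and you are right that $w_i\ge 0$ is not needed there.

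The paper gives no proof of this lemma. It quotes it as a standard auxiliary inequality and only invokes it, with $q=p$, in passing from the termwise bound on $\dot V$ to $\dot V\le -lV^{p}-mV^{2}+n$. So there is no competing argument to compare yours against.
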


\section{Controller Design}\label{sec:controller}

\subsection{NN approximation}
In this paper, a three-layer neural network is employed. 
Let $\bm f_d := \bm f(\dot{\bm p}_d)$ and consider the input vector 
$\bm x_d = \big[\dot{\bm p}_d^\top,\,1\big]^\top \in \mathbb{R}^{N_0+1}$. 
By exploiting the universal approximation property of deep neural network (\cite{DNN_univ}), 
the mapping $\bm f_d(\bm x_d)$ and its neural-network approximation 
$\hat{\bm f}_d = [\hat f_{dx},\,\hat f_{dy},\,\hat f_{dz}]^\top$ can be represented in the following form:
\begin{align}
\label{DNN:fd}
\bm f_d(\bm x_d) &= V_1^\top \bm\phi \big( V_0^\top \bm x_d \big) + \bm \varepsilon(\bm x_d),\\
\label{DNN:fd:hat}
\hat{\bm f}_d(\bm x_d) &= \hat V_1^\top \bm\phi \big( \hat V_0^\top \bm x_d \big),
\end{align}
where $V_i,\hat V_i \in \mathbb{R}^{(N_i+1)\times N_{i+1}}$, $i=0,1$, denote the ideal constant weight matrices and their online estimates, respectively, $N_0$, $N_1$, $N_2$ are the numbers of neurons in the input, hidden, and output layers, the vector 
$\bm \phi = [\phi_{1},\dots,\phi_{N_1},\,1]^\top \in \mathbb{R}^{N_1+1}$ collects the hidden-layer activation functions, $\phi_j$ $(j=1,\dots,N_1)$ is the sigmoid activation at the $j$th hidden neuron and the term $\bm \varepsilon(\bm x_d) \in \mathbb{R}^{N_2}$ represents the bounded approximation error of the neural network.
 

The ideal weight matrices are assumed to be bounded, i.e.,
\begin{align}
  \label{weight_bound}
  \|V_i\|_F^2 = \operatorname{tr}(V_i^\top V_i) \le \bar V_i,\quad i = 0,1,
\end{align}
where $\|\cdot\|_F$ denotes the Frobenius norm of a matrix, $\operatorname{tr}(\cdot)$ denotes the trace operator, and $\bar V_i$ is a positive constants. 
Note that both the input vector $\bm x_d$ and the activation vector $\bm \phi$ are augmented with an additional element $1$ so that bias terms can be absorbed into the weight matrices.

For notational convenience, the following shorthand expressions are introduced:
\begin{align}
  \label{DNN:Phi}
  \bm \phi      = \bm \phi\!\left(V_0^\top \bm x_d\right),
    \hat{\bm \phi} = \bm \phi\!\left(\hat V_0^\top \bm x_d\right).
\end{align}
In addition, the weight-estimation errors are defined as
\begin{align}
  \label{DNN:Vtilde}
  \tilde V_i = V_i - \hat V_i,\quad i = 0,1.
\end{align}

\subsection{Dynamics of command filter}
Define the following change of coordinates:
\begin{equation}\label{eq:z-def}
\begin{cases}
\bm x_1 = \bm p - \bm p_d,\\
\bm x_2 = \bm v - \bm\chi,
\end{cases}
\end{equation}
and the corresponding command filter is given by
\begin{equation}\label{eq:cmd-filter}
\begin{cases}
\dot{\bm\chi} = \bm\xi,\\[2mm]
\dot{\bm\xi} = \dfrac{1}{\varepsilon^{2}}
\Big[-\zeta \arctan\!\big(\bm\chi-\bm\alpha_1\big)
      - \rho \arctan\!\big(\varepsilon\bm\xi\big)\Big],
\end{cases}
\end{equation}
with design parameters $\zeta>0$, $\rho>0$, and $\varepsilon>0$. Here $\bm\alpha_1$ serves as the input to the filter and will be specified as the virtual control and $\bm\chi\in\mathbb{R}^3$ is the filter output. The initial conditions are chosen as $\bm\chi(0)=\bm\alpha_1(0)$ and $\bm\xi(0)=\left[ 0, 0, 0 \right]^\top$. The compensated error is introduced as
\begin{equation}\label{eq:v-def}
  \bm y_i = \bm x_i - \bm\iota_i,\qquad i=1,2,
\end{equation}
where $\bm\iota_i\in\mathbb{R}^3$ are auxiliary compensation states. To counteract the filter-induced mismatch, the following compensation system is adopted:
\begin{equation}\label{eq:iota-dyn}
\left\{
\begin{aligned}
\dot{\bm\iota}_1 &= \left(\bm\iota_2+\bm\chi-\bm\alpha_1\right)
                  - c_1 \norm{\bm\iota_1}^{2}\bm\iota_1
                  - k \bm \iota_1
                  - k_1\Theta(\bm\iota_1),\\[2mm]
\dot{\bm\iota}_2 &= - \bm\iota_{1}
                  - c_2 \norm{\bm\iota_2}^{2}\bm\iota_2
                  - k_2\,\dfrac{\bm\iota_2}{\left(\bm\iota_2^{\top}\bm\iota_2\right)^{1-p}} + \hat{\bm f}_d(\bm x_d).
\end{aligned}
\right.
\end{equation}
Here $c_i,k_i,k \in \mathbb R^{3\times3}$ are positive definite diagonal matrices, and the exponent satisfies $\tfrac{1}{2}<p<1$. 

\begin{property}
  The command filter \eqref{eq:cmd-filter} is designed such that for any twice differentiable and bounded virtual control $\bm\alpha_1$, the filter tracking error is uniformly bounded (\cite{prop1}), i.e., there exists a positive constant vector $\bar{\bm\xi}=[\xi_1, \xi_2, \xi_3] \in \mathbb R^{3}$
  such that
  \begin{align}\label{chi-bound}
    \vert \bm\chi-\bm\alpha_{1}\vert\le \bar{\bm\xi}.
  \end{align}
\end{property}

\begin{property}
The chosen desired trajectory $\bm p_d \in \mathbb{C}^3$, and its first, second, and third derivatives are bounded.
\end{property}

\begin{property}
Based on \emph{Property 2}, one can conclude that $\bm \varepsilon(\bm x_d)$ is bounded, i.e.,
\begin{align}
\Vert \bm \varepsilon(\bm x_d) \Vert \leq \varepsilon_0,
\end{align}
where $\varepsilon_0$ is a positive constant.
\end{property}

\subsection{Controller design}
Based on the backstepping design philosophy, the control design is carried out in two steps.

\textit{Step~1.}
Using \eqref{mod}, \eqref{eq:z-def} and \eqref{eq:v-def}, one obtains
\begin{align}\label{y1-dot}
\dot{\bm y}_1
&= \bm v - \dot{\bm p}_d - \dot{\bm \iota}_1 \notag\\
&= \bm y_2 + \bm\chi + \bm\iota_2 - \dot{\bm p}_d - \dot{\bm \iota}_1.
\end{align}
Substituting the compensation dynamics \eqref{eq:iota-dyn} into \eqref{y1-dot} yields
\begin{equation}
\dot{\bm y}_1= \bm y_2 - \dot{\bm p}_d + \bm\alpha_1 + c_1 \norm{\bm\iota_1}^{2}\bm\iota_1 + k \bm \iota_1 + k_1 \Theta(\bm\iota_1).
\end{equation}
Choose the Lyapunov function candidate
\begin{equation}
V_1 = \frac{1}{2}\, \bm y_1^{\top}\bm y_1,
\end{equation}
then its time derivative is,
\begin{align}
\dot V_1
= \bm y_1^{\top}\!\left(\!
\bm y_2 \!-\! \dot{\bm p}_d \!+\! \bm\alpha_1 \!+\! c_1 \norm{\bm\iota_1}^{2}\bm\iota_1 \!+\! k\bm \iota_1 \!+\! k_1 \Theta(\bm\iota_1)
\right),
\end{align}
based on which, the virtual control $\bm\alpha_1$ is designed as follows:
\begin{align}\label{alpha1}
\bm\alpha_1 =& \dot{\bm p}_d - c_1 \norm{\bm\iota_1}^{2}\bm\iota_1 - k \bm \iota_1 - k_1 \Theta(\bm\iota_1) - \beta_{1}\norm{\bm y_1}^2 \bm y_1 \nonumber \\
&- \gamma_{1}\Theta(\bm y_1),
\end{align}
where $\beta_{1}, \gamma_{1} \in \mathbb R^{3\times3}$ are positive definite diagonal matrices. Consequently, one has
\begin{align}
\dot V_1
= \bm y_1^{\top}\!\Big(
\bm y_2 \!-\! \beta_{1}\norm{\bm y_1}^2 \bm y_1 - \gamma_{1}\Theta(\bm y_1)
\Big).
\end{align}

\textit{Step~2.}
Using \eqref{mod}, \eqref{eq:z-def}, \eqref{eq:v-def} and \eqref{eq:iota-dyn}, one obtains
\begin{align}\label{y2-dot}
\dot{\bm y}_2
=&\bm u + \Delta + \bm f(\dot{\bm p}) - \dot{\bm\chi} - \dot{\bm \iota}_2 \notag\\
=& \bm u + \Delta + \bm f(\dot{\bm p}) - \dot{\bm\chi} + \bm\iota_{1} + c_2 \norm{\bm\iota_2}^{2}\bm\iota_2 \nonumber \\
&+ k_2\,\dfrac{\bm\iota_2}{\big(\bm\iota_2^{\top}\bm\iota_2\big)^{1-p}} - \hat{\bm f}_d(\bm x_d) \nonumber \\
=& \bm u + \Delta - \dot{\bm\chi} + \bm\iota_{1} + c_2 \norm{\bm\iota_2}^{2}\bm\iota_2 + k_2\,\dfrac{\bm\iota_2}{\big(\bm\iota_2^{\top}\bm\iota_2\big)^{1-p}} \nonumber \\
&+ \tilde{\bm f} + \bm S(\dot{\bm p}, \dot{\bm p}_d),
\end{align}
where $\tilde{\bm f} = \bm f_d - \hat{\bm f}_d, \bm S(\dot{\bm p}, \dot{\bm p}_d) = \bm f(\dot{\bm p}) - \bm f(\dot{\bm p}_d)$. 
From Assumption~\ref{ass:f-Lip} and the definition of $\bm S(\dot{\bm p},\dot{\bm p}_d)$, one has
\begin{equation}\label{eq:S-Lip0}
  \bigl\| \bm S(\bm p_d,\dot{\bm p}_d) \bigr\|
  = \bigl\| \bm f(\dot{\bm p}) - \bm f(\dot{\bm p}_d) \bigr\|
  \le L_f \bigl\|\dot{\bm p} - \dot{\bm p}_d\bigr\|.
\end{equation}
Using $\dot{\bm p} = \bm v$, the coordinate transformations 
$\bm x_2 = \bm v - \bm \chi$, $\bm y_2 = \bm x_2 - \bm\iota_2$ and \eqref{theta-bound}, \eqref{chi-bound}, \eqref{alpha1}, it follows that
\begin{align}\label{eq:S-Lip1}
  &\left\|\dot{\bm p} - \dot{\bm p}_d \right\| \nonumber \\
  =& \left\|\bm x_2 + \bm \chi - \dot{\bm p}_d\right\| \nonumber\\
  =& \left\|\bm y_2 + \bm\iota_2 + (\bm\chi -\bm\alpha_{1}) + (\bm\alpha_{1} - \dot{\bm p}_d)\right\| \nonumber \\
  \le& \|\bm y_2\| + \|\bm\iota_2\| + \Vert\bar{\bm\xi}\Vert + \Vert c_1\Vert_2 \|\bm\iota_1\|^3 + \Vert k+k_1k_\Theta \Vert_2 \|\bm\iota_1\| \nonumber \\
  & + \Vert \beta_1\Vert_2 \|\bm y_1\|^3 + \Vert\gamma_1k_\Theta\Vert_2 \|\bm y_1\| + \Vert k_1c_\Theta + \gamma_1 c_{\Theta}\Vert_2 \nonumber \\
  =& \|\bm y_2\| + \|\bm\iota_2\| + c_1\|\bm\iota_1\|^3 + \beta_1\|\bm y_1\|^3 + a_{\iota1}\|\bm\iota_1\| \nonumber \\
  & + a_{y1}\|\bm y_1\| + c_{s1},
\end{align}
where $a_{\iota1} = \Vert k+k_1k_\Theta \Vert_2, a_{y1}=\Vert\gamma_1k_\Theta\Vert_2, c_{s1}=\Vert\bar{\bm\xi}\Vert+\Vert k_1c_\Theta+\gamma_1c_{\Theta}\Vert_2$. Thus, one has
\begin{align}\label{eq:S-Lip2}
  \bigl\| \bm S(\dot{\bm p},\dot{\bm p}_d) \bigr\|
  \le L_f&\left(\| \bm y_2\| \!+\! \|\bm\iota_2\| \!+ \!\Vert c_1 \Vert_2\|\bm\iota_1\|^3 + \Vert \beta_1 \Vert_2 \|\bm y_1\|^3 \right. \nonumber\\
  & \left.+ a_{\iota1}\|\bm\iota_1\| + a_{y1}\|\bm y_1\| + c_{s1} \right).
\end{align}

Furthermore, one obtains
\begin{align}
  \bm y_2^\top \bm S
  \le& L_f\|\bm y_2\|\left(\| \bm y_2\| + \|\bm\iota_2\| + \Vert c_1 \Vert_2 \|\bm\iota_1\|^3 + \Vert\beta_1\Vert_2 \|\bm y_1\|^3 \right. \nonumber\\
  & \quad\quad\quad\quad \left.+ a_{\iota1}\|\bm\iota_1\| + a_{y1}\|\bm y_1\| + c_{s1} \right) \notag\\
  =& L_f\|\bm y_2\|^2 \!+\! L_f\|\bm y_2\| \|\bm\iota_2\|
     \!+\! L_f \Vert c_{1} \Vert_2 \|\bm y_2\| \|\bm \iota_1\|^3 \nonumber\\
    & + L_f \Vert \beta_1 \Vert_2 \|\bm y_2\| \|\bm y_1\|^3 + L_f a_{\iota1}\|\bm y_2\| \|\bm \iota_1\| \nonumber\\
    & + L_f a_{y1}\|\bm y_2\| \|\bm y_1\| + L_f c_{s1}\|\bm y_2\|. \label{eq:y2S-1}
\end{align}
For any design constants $\eta_1,\eta_2,\eta_3,\eta_4>0$, Young's inequality yields
\begin{align}
  &L_f\|\bm y_2\|\,\|\bm\iota_2\|
  \le \frac{\eta_1}{2}\|\bm y_2\|^2 + \frac{L_f^2}{2\eta_1}\|\bm\iota_2\|^2, \label{eq:y2S-Young1}\\
  &L_f c_{s1}\|\bm y_2\|
  \le \frac{\eta_2}{2}\|\bm y_2\|^2 + \frac{L_f^2 c_{s1}^2}{2\eta_2}, \\
  &L_f a_{\iota_1}\|\bm y_2\|\|\bm \iota_1\|
  \le \frac{\eta_3}{2}\|\bm y_2\|^2+\frac{L_f^2 a_{\iota_1}^2}{2\eta_3}\|\bm \iota_1\|^2, \\
  &L_f a_{y_1}\|\bm y_2\|\|\bm y_1\|
  \le \frac{\eta_4}{2}\|\bm y_2\|^2+\frac{L_f^2 a_{y_1}^2}{2\eta_4}\|\bm y_1\|^2, 
\end{align}
\begin{align}
  &L_f \Vert c_1 \Vert_2\|\bm y_2\|\|\bm \iota_1\|^3
  \le \sigma^{(1)}_{y_2,4}\|\bm y_2\|^4+\sigma_{\iota_1,4}\|\bm \iota_1\|^4, \\
  &L_f \Vert\beta_1\Vert_2\|\bm y_2\|\|\bm y_1\|^3
  \le \sigma^{(2)}_{y_2,4}\|\bm y_2\|^4+\sigma_{y_1,4}\|\bm y_1\|^4, \label{eq:y2S-Young2}
\end{align}
where $\sigma^{(1)}_{y_2,4}, \sigma_{\iota_1,4}, \sigma^{(2)}_{y_2,4}, \sigma_{y_1,4}$ are known constants obtained by Lemma 2. Substituting \eqref{eq:y2S-Young1}--\eqref{eq:y2S-Young2} into \eqref{eq:y2S-1} gives
\begin{align}\label{eq:y2S-final}
&\bm y_2^\top \bm S
\le \bar\sigma_{1}\|\bm y_2\|^4
   + \bar\sigma_{2}\|\bm y_1\|^4
   + \bar\sigma_{3}\|\bm \iota_1\|^4
   + \bar\sigma_{4}\|\bm y_2\|^2 \notag\\
&\quad\quad\quad + \bar\sigma_{5}\|\bm y_1\|^2
   + \bar\sigma_{6}\|\bm \iota_1\|^2
   + \bar\sigma_{7}\|\bm \iota_2\|^2
   + c_s, \\
&\bar\sigma_{1} = \sigma^{(1)}_{y_2,4} + \sigma^{(2)}_{y_2,4}, 
\bar\sigma_{2} = \sigma^{(1)}_{y_1,4}, 
\bar\sigma_{3} = \sigma_{\iota_1,4}, \nonumber \\
&\bar\sigma_{4} = L_f + \frac{\eta_1+\eta_2+\eta_3+\eta_4}{2}, \nonumber \\
&\bar\sigma_{5}=\frac{L_f^2 a_{y_1}^2}{2\eta_4}, \bar\sigma_{6}=\frac{L_f^2 a_{\iota_1}^2}{2\eta_3}, \bar\sigma_{7} = \frac{L_f^2}{2\eta_1}, c_s = \frac{L_f^2 c_{s1}^2}{2\eta_2}. \nonumber
\end{align}

Choose the Lyapunov function candidate
\begin{equation}
V_2 = V_1 + \frac{1}{2}\, \bm y_2^{\top}\bm y_2 +  \frac12 \mathrm{tr}(\tilde V_1^\top \Gamma_1^{-1} \tilde V_1) + \frac12 \mathrm{tr}(\tilde V_0^\top \Gamma_0^{-1} \tilde V_0),
\end{equation}
then its time derivative is,
\begin{align}\label{V2-dot}
\dot V_2
= &\bm y_1^{\top}\!\left(\bm y_2 \!-\! \beta_{1}\norm{\bm y_1}^2 \bm y_1 - \gamma_{1}\Theta(\bm y_1)\right) + \bm y_2^{\top}\!\left(
\bm u \!+\! \Delta \!-\! \dot{\bm\chi} \!+\! \bm\iota_{1} \right.\nonumber\\
& \left. + c_2 \norm{\bm\iota_2}^{2}\bm\iota_2 \!+\! k_2\,\dfrac{\bm\iota_2}{\left(\bm\iota_2^{\top}\bm\iota_2\right)^{1-p}}\! + \!\tilde{\bm f} \!+ \! \bm S(\dot{\bm p}, \dot{\bm p}_d) \right) \nonumber \\
& + \mathrm{tr}(\tilde V_1^\top \Gamma_1^{-1} \dot {\tilde V}_1) + \mathrm{tr}(\tilde V_0^\top \Gamma_0^{-1} \dot {\tilde V}_0).
\end{align}
Design the auxiliary virtual control input
\begin{align}
\bm\alpha_2 =&\, \dot{\bm \chi} - \bm x_1 - c_2 \norm{\bm\iota_2}^{2} \bm\iota_2 - k_2\,\dfrac{\bm\iota_2}{\big(\bm\iota_2^{\top}\bm\iota_2\big)^{1-p}} - \beta_{2}\norm{\bm y_2}^2 \bm y_2 \nonumber \\
&- \gamma_{2}\dfrac{\bm y_2}{\big(\bm y_2^{\top}\bm y_2\big)^{1-p}},
\end{align}
where $\beta_{2}, \gamma_{2} \in \mathbb R^{3\times3}$ are positive definite diagonal matrices. The neural network weight update laws are designed as
\begin{align}
  \label{update1}
& \dot{\hat V}_1 = \mathrm{proj}\!\left(\Gamma_1 \bm\phi(\hat V_0^\top \bm x_d)\, \bm y_2^\top \right), \\
\label{update2}
& \dot{\hat V}_0 = \mathrm{proj}\!\left(\Gamma_0 \bm x_d \big[ \phi^{\prime\top}\!(\hat V_0^\top \bm x_d) \hat V_1 \bm y_2 \big]^\top \right).
\end{align}
where $\Gamma_0\in \mathbb R^{(N_0+1)\times(N_0+1)}, \Gamma_1\in \mathbb R^{(N_1+1)\times(N_1+1)}$ are positive definite diagonal gain matrices, $\phi^\prime\in \mathbb R^{(N_1+1)\times N_1}$ denotes the gradient of the activation function and $\phi_i^\prime (\bm a) = \frac{\partial}{\partial \bm b} \bm \phi(\bm b)\big\vert_{\bm b = \bm a}, \forall \bm a \in \mathbb{R}^{N_1}$, $\mathrm{proj}(\cdot)$ denotes the standard smooth projection operator (\cite{proj}), which keeps the estimates $\dot{\hat V}_i$ bounded. 

By utilizing the Mean Value Theorem, there exists a vector $\bm \zeta$ between $V_0^\top \bm x_d$ and $\hat V_0^\top \bm x_d$ such that $ \bm\phi -  \hat{\bm\phi} = \phi^\prime(\bm \zeta) \tilde V_0^\top \bm x_d $. Furthermore, one has
\begin{align}\label{tilde-f}
\tilde{\bm f} =& \bm f_d - \hat{\bm f}_d \notag\\
=& V_1^\top \bm\phi + \bm \varepsilon - \hat V_1^\top \hat{\bm\phi}  \notag\\
=& V_1^\top \bm\phi - V_1^\top \hat{\bm\phi} + V_1^\top \hat{\bm\phi} - \hat V_1^\top \hat{\bm\phi} + \bm \varepsilon \notag \\
=& \hat V_1^\top \! \phi^\prime(\bm \zeta) \! \tilde V_0^\top \bm x_d \!+\! \tilde V_1^\top \!\phi^\prime(\bm \zeta)\!  \tilde V_0^\top \bm x_d \!+\! \tilde V_1^\top \hat{\bm\phi} \!+\! \bm \varepsilon \notag\\
=&\hat V_1^\top \phi^\prime(\hat V_0^\top \bm x_d) \tilde V_0^\top \bm x_d \!+\! \tilde V_1^\top \!\phi^\prime(\bm \zeta)  \tilde V_0^\top \bm x_d \notag\\
& \hat V_1^\top \!\!\left[ \phi^\prime(\bm \zeta) - \phi^\prime(\hat V_0^\top \bm x_d) \right] \! \tilde V_0^\top \bm x_d \!+\! \tilde V_1^\top \hat{\bm\phi} \!+\! \bm \varepsilon.
\end{align}
Substituting \eqref{update1}, \eqref{update2}, \eqref{tilde-f} into \eqref{V2-dot}, one obtains
\begin{align}
  \label{dotV2}
&\dot V_2
= -\Vert\beta_{1}\Vert_2 \left(\bm y_1^\top \bm y_1 \right)^2- \Vert\gamma_{1}\Vert_2 \bm y_1^{\top} \Theta(\bm y_1) - \Vert\beta_{2}\Vert_2 \left(\bm y_2^\top \bm y_2 \right)^2 \nonumber\\
&\quad\quad - \Vert\gamma_{2}\Vert_2 {\big(\bm y_2^{\top}\bm y_2\big)^{p}} + \bm y_2^\top \left( \bm u \!-\! \bm\alpha_2\right) + \bm y_2^\top\Delta + N_B + \bm y_2^\top \bm S , \nonumber\\
&N_B = N_{B1} + N_{B2} + N_{B3}, \nonumber \\
&N_{B1} = \bm y_2^\top \tilde V_1^\top \phi^\prime(\bm \zeta)  \tilde V_0^\top \bm x_d, N_{B3} = \bm y_2^\top \bm \varepsilon, \nonumber \\
&N_{B2} = \bm y_2^\top \hat V_1^\top \!\!\left[ \phi^\prime(\bm \zeta) \!- \phi^\prime(\hat V_0^\top \bm x_d) \right] \! \tilde V_0^\top\! \bm x_d,
\end{align}
and based on the boundedness of $\hat V_1$, $\hat V_0$, $\phi^\prime(\cdot)$, and $\bm x_d$, there exist known positive constants $a$, $l_1$, $l_2$, and $\bar\varepsilon$ such that the following inequalities hold:
\begin{align}\label{NB-bound}
|N_{B1}| &\le \tfrac{a}{2}(\|\tilde V_1\|^2+\|\tilde V_0\|^2)
              + \tfrac{l_1^2}{2a}\|\bm x_d\|^2\|\bm y_2\|^2,\nonumber \\
|N_{B2}| &\le \tfrac{a}{2}\|\tilde V_0\|^2
              + \tfrac{l_2^2}{2a}\|\bm x_d\|^2\|\bm y_2\|^2, \nonumber \\
|N_{B3}| &\le \tfrac12\|\bm y_2\|^2+\tfrac12 \varepsilon_0^2.
\end{align}
As for $\bm y_2^\top \Delta$, by utilizing H\"{o}lder inequality, it can be rearranged as follows:
\begin{align}\label{y2delta}
  \bm y_2^\top \Delta \le \frac{\Vert\beta_2\Vert_2}{4}\|\bm y_2 \|^4 + \frac{3}{4}\Vert\beta_2\Vert_2^{-\frac13} \bar\Delta^{\frac43}.
\end{align}

Furthermore, define the event-triggered control input as
\begin{equation}\label{eq:ETC-law}
\left\{
\begin{aligned}
\bar{\bm u}(t) &= \bm\alpha_2(t) - \Big(\sigma+\tfrac{\kappa(t)}{\delta}\Big)\operatorname{sgn}\!\big(\bm y_2(t)\big), \\[2mm]
\bm u(t) &= \bar{\bm u}(t_i), \;\;  t\in[t_i,\,t_{i+1}), \\[1mm]
t_{i+1} &= \inf\left\{ t>t_i \middle| \kappa(t) + \delta\big(\sigma - |\tilde e_j(t)|\big) < 0 \right. \\
& \quad\quad\quad \left. \exists \, j\!=\!1,2,3 \right\}, \\[1mm]
\dot\kappa(t) &= -\,\kappa(t) + \max_{1\le j\le 3}\!\left(\sigma - |\tilde e_j(t)|\right),
\end{aligned}\right.
\end{equation}
where $\sigma>0$, $\delta>0$, and $\kappa(0)\ge0$ are design constants,
$\operatorname{sgn}(\cdot)$ acts componentwise,
$\tilde{\bm e}(t)=\left[ \tilde e_1, \tilde e_2, \tilde e_3 \right]^\top\triangleq \bm u(t)-\bar{\bm u}(t)$ is the hold error.

\section{Stability Analysis}\label{sec:stability}
\begin{theorem}\label{thm:ETC}
Consider system~\eqref{mod}, with the event-triggered implementation in~\eqref{eq:ETC-law}, all closed-loop
signals are bounded and $\bm p(t)$ tracks $\bm p_d(t)$ within a fixed-time practical bound under the condition that $ \Vert c_1 \Vert_2> \frac43\bar\sigma_{3}, \Vert \beta_1 \Vert_2> \frac43\bar\sigma_{2}, \Vert \beta_{2} \Vert_2$ $> 2\bar\sigma_1 $. Moreover, Zeno phenomena do not occur.
\end{theorem}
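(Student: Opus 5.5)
We will work with a composite Lyapunov function combining the tracking part $V_2$ with the compensation states, $V = V_2 + \tfrac12\bm\iota_1^\top\bm\iota_1 + \tfrac12\bm\iota_2^\top\bm\iota_2$. The argument has three parts: a fixed-time inequality for $V$, boundedness of all signals, and exclusion of Zeno behaviour.

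\emph{Step 1 (event-triggered term).} First we handle $\bm y_2^\top(\bm u-\bm\alpha_2)$ in \eqref{dotV2}. On $[t_i,t_{i+1})$ we write $\bm u=\bar{\bm u}+\tilde{\bm e}$ and use the first line of \eqref{eq:ETC-law}. This gives
\begin{equation*}
\bm y_2^\top(\bm u-\bm\alpha_2)=\sum_{j}\Big[-\big(\sigma+\tfrac{\kappa}{\delta}\big)|y_{2j}|+y_{2j}\tilde e_j\Big]\le\sum_j |y_{2j}|\Big(|\tilde e_j|-\sigma-\tfrac{\kappa}{\delta}\Big).
\end{equation*}
The triggering rule keeps $\kappa+\delta(\sigma-|\tilde e_j|)\ge0$ for all $j$ between events, so this term is nonpositive. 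To apply the rule we need $\kappa(t)\ge0$. From the $\kappa$ dynamics and the same inequality, $\dot\kappa\ge-\kappa-\kappa/\delta$, so $\kappa(t)\ge\kappa(0)e^{-(1+1/\delta)t}\ge0$ by comparison.

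\emph{Step 2 (remaining terms of $\dot V$).} We use \eqref{NB-bound} for $N_B$, \eqref{y2delta} for $\bm y_2^\top\Delta$ and \eqref{eq:y2S-final} for $\bm y_2^\top\bm S$. Because $\hat V_i$ is kept bounded by the projection and $V_i$ is bounded by \eqref{weight_bound}, the $\|\tilde V_i\|^2$ terms are bounded constants. We then differentiate $\tfrac12\|\bm\iota_1\|^2+\tfrac12\|\bm\iota_2\|^2$ using \eqref{eq:iota-dyn}. The cross terms $\bm\iota_1^\top\bm\iota_2$ cancel. The remaining terms $\bm\iota_1^\top(\bm\chi-\bm\alpha_1)$ and $\bm\iota_2^\top\hat{\bm f}_d$ are bounded by \eqref{chi-bound} and the projection, and we absorb them with Young's inequality (Lemma 2).

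All quadratic terms ($\|\bm y_i\|^2,\|\bm\iota_i\|^2$, including those with coefficients $l_1^2\|\bm x_d\|^2/2a$) we convert via Lemma 2 into $\epsilon\|\cdot\|^4+C$. The quartic terms are dominated under the stated gain conditions. The term $\bar\sigma_3\|\bm\iota_1\|^4$ is dominated by $c_1\|\bm\iota_1\|^4$ because $\|c_1\|_2>\tfrac43\bar\sigma_3$, and $\bar\sigma_2\|\bm y_1\|^4$ is dominated by $\beta_1$. The term $\bar\sigma_1\|\bm y_2\|^4+\tfrac14\|\beta_2\|_2\|\bm y_2\|^4$ is dominated by $\beta_2$ because $\|\beta_2\|_2>2\bar\sigma_1$. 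The constant factors $\tfrac43$ and $2$ leave slack for the extra quartic terms created in this absorption. For $\bm y_1^\top\Theta(\bm y_1)$ and $\bm\iota_1^\top\Theta(\bm\iota_1)$ we use that, outside the ball $\|\cdot\|^2\le\epsilon$, they equal $\|\cdot\|^{2p}$. Inside the ball the deficit is at most a constant of order $\epsilon^{p}$. The $\bm\iota_2,\bm y_2$ power terms give $(\cdot^\top\cdot)^p$ directly. The outcome is
\begin{equation*}
\dot V\le-\mu_1\sum\|\cdot\|^{2p}-\mu_2\sum\|\cdot\|^4+C,
\end{equation*}
where the sums run over $\bm y_1,\bm y_2,\bm\iota_1,\bm\iota_2$.

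\emph{Step 3 (fixed-time conclusion).} Let $W$ denote the part of $V$ built from $\bm y_i$ and $\bm\iota_i$. Lemma 3 gives $\sum\|\cdot\|^{2p}\ge 2^{p}W^{p}$ and $\sum\|\cdot\|^4\ge W^2$ (up to the factor $1/4$). The weight-error part of $V$ is bounded, so it can be added to both sides at the cost of a constant. We therefore obtain $\dot V\le-aV^{p}-bV^{2}+C'$. The standard practical fixed-time lemma then gives convergence of $V$ to a residual set within a settling time $T\le\frac{1}{a\theta(1-p)}+\frac{1}{b\theta}$, with $\theta\in(0,1)$, independent of initial conditions. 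Boundedness of $\bm y_i$ and $\bm\iota_i$ gives boundedness of $\bm x_1$, $\bm\alpha_1$ (via \eqref{theta-bound}), $\bm\chi$ (via \eqref{chi-bound}), $\bm v$, $\bm\alpha_2$, $\bar{\bm u}$ and $\kappa$. Finally, $\bm x_1=\bm y_1+\bm\iota_1$ yields the practical tracking bound.

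\emph{Step 4 (Zeno exclusion).} At $t_i$ we have $\tilde{\bm e}=0$, so the triggering condition fails there with margin $\delta\sigma>0$. The next event requires some $|\tilde e_j|$ to grow to at least $\sigma+\kappa/\delta\ge\sigma$. Between events $\dot{\tilde{\bm e}}=-\dot{\bar{\bm u}}$, and it suffices to show $|\dot{\bar{\bm u}}|\le\bar d$, which gives $t_{i+1}-t_i\ge\sigma/\bar d>0$. I expect this to be the main obstacle, for two reasons. First, $\operatorname{sgn}(\bm y_2)$ is discontinuous, so $\dot{\bar{\bm u}}$ is not classically bounded. The remedy I would try first is to treat the sign term as piecewise constant and note that it jumps only when $y_{2j}$ crosses zero; at such a jump $|\tilde e_j|$ changes by $2(\sigma+\kappa/\delta)$, so the crossing itself triggers an event, and one must then argue that such crossings are not Zeno. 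If that fails, I would replace $\operatorname{sgn}$ with a smooth approximation such as $\tanh(\cdot/\varpi)$, paying a small extra constant in Step 1. Second, $\dot{\bm\alpha}_2$ must be bounded. This follows from the bounded closed-loop signals, the differentiability of $\Theta$ (Lemma 1), and the boundedness of $\dot{\bm\xi}$ from the filter. The power terms $\bm y_2/(\bm y_2^\top\bm y_2)^{1-p}$ are non-Lipschitz at $\bm y_2=0$, so I would rely on $p>1/2$ so that their derivative along trajectories remains bounded except near the origin, and handle that region together with the sign-crossing argument.
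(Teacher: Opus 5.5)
Your Steps~1--3 follow the paper's proof essentially step for step. You use the same composite $V$ and the same trigger-rule argument for $\bm y_2^\top(\bm u-\bm\alpha_2)\le 0$ (componentwise instead of via the auxiliary $\bm\tau(t)$). You use the same Young absorptions under $\Vert c_1\Vert_2>\frac43\bar\sigma_3$, $\Vert\beta_1\Vert_2>\frac43\bar\sigma_2$, $\Vert\beta_2\Vert_2>2\bar\sigma_1$, the same $\epsilon^p$ deficit for $\Theta$ inside the ball, and the same practical fixed-time lemma. You are more careful than the paper on three points: proving $\kappa\ge 0$, moving the bounded weight-error part of $V$ into the constant, and reading off the tracking bound from $\bm x_1=\bm y_1+\bm\iota_1$.

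The gap is Step~4. Zeno exclusion is part of the statement, and it is not proved. The route you propose also fails as stated, even away from sign changes. No uniform $\bar d$ with $\|\dot{\bar{\bm u}}\|\le\bar d$ exists, because $\bm\alpha_2$ contains $\gamma_2\bm y_2(\bm y_2^\top\bm y_2)^{p-1}$ and $k_2\bm\iota_2(\bm\iota_2^\top\bm\iota_2)^{p-1}$. Their time derivatives are of order $\|\bm y_2\|^{2p-2}\|\dot{\bm y}_2\|$ (resp.\ the same with $\bm\iota_2$), which is unbounded as $\bm y_2\to 0$ unless $\dot{\bm y}_2$ vanishes there. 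The condition $p>\frac12$ gives only H\"older continuity with exponent $2p-1$, not a bounded derivative. This part is repairable: once $\dot{\bm y}_2$, $\dot{\bm\iota}_2$ and $\dot\kappa$ are bounded, $\bm\alpha_2$ and $\kappa$ have a uniform modulus of continuity in $t$. That gives a lower bound on the time the continuous part of $\tilde e_j$ needs to reach $\sigma$.

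The essential missing idea concerns the sign term. As you observe, every zero crossing of a component $y_{2j}$ moves $\bar u_j$ by $\sigma+\kappa/\delta$ (or $2(\sigma+\kappa/\delta)$) and so fires an event. Zeno exclusion therefore reduces to showing that zero crossings of the components of $\bm y_2$ cannot accumulate in finite time, and your proposal leaves exactly this open. It is not a technicality, for two reasons.
\begin{itemize}
\item The term $-(\sigma+\kappa/\delta)\operatorname{sgn}(\bm y_2)$ pushes each $y_{2j}$ toward $0$ from both sides, so chattering about $y_{2j}=0$ is the natural closed-loop behaviour.
\item After an event taken at $y_{2j}=0$, the held value is $u_j=\alpha_{2j}$ at that instant. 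As soon as $y_{2j}$ leaves zero, $|\tilde e_j|$ equals the threshold $\sigma+\kappa/\delta$ up to the increments of $\alpha_{2j}$ and $\kappa$, so the next event may follow immediately.
\end{itemize}
Your fallback of replacing $\operatorname{sgn}$ by $\tanh(\cdot/\varpi)$ removes the discontinuity. However, it proves the result for a different controller than \eqref{eq:ETC-law}.

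Nor can you borrow this step from the paper. Its Zeno argument does not bound $\dot{\bar{\bm u}}$. Instead it argues by contradiction from the continuity of $\bm\alpha_2$ and $\kappa$, bounding the limiting jump of $\bar{\bm u}$ by $\sigma+\kappa/\delta$. But a jump of exactly that size is what a zero crossing produces at the event instant, and it is compatible with the trigger. So accumulating crossings are not ruled out there either.
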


\begin{proof}

\medskip\noindent\textit{Exclusion of Zeno.}
Assume, by contradiction, that the inter-event interval $\Delta_i=t_{i+1}-t_i\to 0$.
From~\eqref{eq:ETC-law}, and based on the fact that $\bm\alpha_2(\cdot)$ and $\kappa(\cdot)$ are continuous, one has
\begin{align}
  \label{zeno}
&\lim_{\Delta_i\to 0}\!\left\|\bar{\bm u}(t_{i+1})-\bar{\bm u}(t_i)\right\| \notag\\
 =& \lim_{\Delta_i\to 0}\!\left\|
 \bm\alpha_2(t_{i+1})-\bm\alpha_2(t_i)
 -\left(\sigma+\frac{\kappa(t_{i+1})}{\delta}\right)\! \operatorname{sgn}(\bm y_2(t_{i+1})) \right. \notag\\
 &\left. +\left(\sigma+\frac{\kappa(t_{i})}{\delta}\right) \operatorname{sgn}(\bm y_2(t_i)) \right\| \notag\\
 =&\lim_{\Delta_i\to 0}\! \left\|\sigma \left[ \operatorname{sgn}\left(\bm y_2 \left(t_{i}\right) \right) - \operatorname{sgn}\left(\bm y_2\left(t_{i+1}\right)\right) \right] \right\| \notag\\
 &+ \lim_{\Delta_i\to 0} \left\| \frac{\kappa(t_{i})}{\delta} \! \operatorname{sgn}(\bm y_2(t_{i+1})) - \frac{\kappa(t_{i+1})}{\delta} \! \operatorname{sgn}(\bm y_2(t_{i+1})) \right\|,\notag\\
 &+ \lim_{\Delta_i\to 0} \left\| \frac{\kappa(t_{i})}{\delta} \! \operatorname{sgn}(\bm y_2(t_{i})) - \frac{\kappa(t_{i})}{\delta} \! \operatorname{sgn}(\bm y_2(t_{i+1})) \right\| \notag\\
 =&\lim_{\Delta_i\to 0}\! \left\|\sigma \left[ \operatorname{sgn}\left(\bm y_2 \left(t_{i}\right) \right) - \operatorname{sgn}\left(\bm y_2\left(t_{i+1}\right)\right) \right] \right\| \notag\\
 &+ \lim_{\Delta_i\to 0} \left\| \frac{\kappa(t_{i})}{\delta} \! \operatorname{sgn}(\bm y_2(t_{i})) - \frac{\kappa(t_{i})}{\delta} \! \operatorname{sgn}(\bm y_2(t_{i+1})) \right\|.
\end{align}
If no zero crossing occurs for $\bm y_2$, \eqref{zeno} tends to zero.
If a zero crossing occurs, then with $\operatorname{sgn}(0)=0$,
$\|\operatorname{sgn}(\bm y_2(t_{i+1}))-\operatorname{sgn}(\bm y_2(t_i))\|\le 1$, and thus
\begin{equation}
  \label{zeno1}
\lim_{\Delta_i\to 0}\!\left\|\bar{\bm u}(t_{i+1})-\bar{\bm u}(t_i)\right\|
\le \sigma+\frac{\kappa(t_i)}{\delta}. 
\end{equation}
Therefore, \eqref{zeno1} always satisfies, which contradicts the triggered condition. Hence a strictly positive dwell time exists and Zeno behavior is excluded.

From~\eqref{eq:ETC-law}, for each inter-event interval $t\in[t_i,$ $t_{i+1})$ there exists
a time-varying vector $\bm\tau(t) =\left[ \tau_1, \tau_2, \tau_3\right]^\top$ $\in\mathbb{R}^3$ such that $\| \bm\tau(t)\|_\infty\le 1$,
$\tau_j(t_i)=0 \, (j=1,2,3)$, and
\[
\bar{\bm u}(t) = \bm u(t) + \Big(\sigma+\frac{\kappa(t)}{\delta}\Big) \bm\tau(t),
\]
where $\bar{\bm u}$ is the continuously updated control before the zero-order hold.
Hence
\begin{align}
\bm y_2^\top \left(\bm u-\bm\alpha_2\right)
 &= \bm y_2^\top \!\left(\bar{\bm u} - \left(\sigma+\frac{\kappa}{\delta}\right)\bm\tau - \bm\alpha_2\right)\notag\\
 &= \bm y_2^\top \!\left( - \left(\sigma+\tfrac{\kappa}{\delta}\right)\operatorname{sgn}\!\left(\bm y_2\right) - \left(\sigma+\frac{\kappa}{\delta}\right)\bm\tau \right)\notag\\
 &= \bm y_2^\top \!\left(\!-\,\operatorname{sgn}(\bm y_2)-\bm\tau\right)\left(\sigma+\frac{\kappa}{\delta}\right)
 \;\le\; 0 ,
\end{align}
which yields
\begin{equation}
\bm y_2^\top \left(\bm u-\bm\alpha_2\right)\le 0 .
\label{eq:key-ineq}
\end{equation}

Using~\eqref{eq:key-ineq} in the Lyapunov derivative obtained from~\eqref{dotV2}, and utilizing \eqref{eq:y2S-final}, \eqref{NB-bound}, \eqref{y2delta}, one can conclude that 

\begin{align}
  \dot V_2 
  &\le -\Vert \beta_1 \Vert_2(\bm y_1^\top\bm y_1)^2 - \Vert \gamma_1 \Vert_2 \bm y_1^\top\Theta(\bm y_1)
        - \Vert \beta_2 \Vert_2(\bm y_2^\top\bm y_2)^2 \notag\\
  &\quad - \Vert\gamma_2\Vert_2(\bm y_2^\top\bm y_2)^p + \frac{\Vert\beta_2\Vert_2}{4}\|\bm y_2 \|^4 + \frac{3}{4}\Vert\beta_2\Vert_2 ^{-\frac13} \bar\Delta^{\frac43}  \notag\\
  &\quad + c_{N1}\|\bm y_2\|^2 + c_{N0} + \bar\sigma_{1}\|\bm y_2\|^4 + \bar\sigma_{2}\|\bm y_1\|^4
   + \bar\sigma_{3}\|\bm \iota_1\|^4 \nonumber\\
  &\quad + \bar\sigma_{4}\|\bm y_2\|^2 + \bar\sigma_{5}\|\bm y_1\|^2
   + \bar\sigma_{6}\|\bm \iota_1\|^2 + \bar\sigma_{7}\|\bm \iota_2\|^2
   + c_s,
\end{align}
where $c_{N0},c_{N1}>0$ are constants arising from the bounds of $N_{B1}, N_{B2}$ and $N_{B3}$. Rearranging terms yields
\begin{align}\label{eq:Vn-dot}
  \dot V_2 
  &\le -\Vert \beta_1\Vert_2(\bm y_1^\top\bm y_1)^2 - \Vert\gamma_1\Vert_2 \bm y_1^\top\Theta(\bm y_1)
        - \Vert\beta_2\Vert_2(\bm y_2^\top\bm y_2)^2 \notag\\
  &\quad - \Vert\gamma_2\Vert_2(\bm y_2^\top\bm y_2)^p + \left(\frac{\Vert\beta_2\Vert_2}{4}+\bar\sigma_{1}\right)\!\! \|\bm y_2 \|^4 + \bar\sigma_{2}\|\bm y_1\|^4 \notag\\
  & \quad + \bar\sigma_{3}\|\bm \iota_1\|^4  + \bar c_{y2}\|\bm y_2\|^2  + \bar\sigma_{5}\|\bm y_1\|^2
   + \bar\sigma_{6}\|\bm \iota_1\|^2 \nonumber \\
   &\quad  + \bar\sigma_{7}\|\bm \iota_2\|^2
   + \bar c_0, 
\end{align}
with $\bar c_{y2}=c_{N1}+\bar\sigma_{4}$ and $\bar c_0 = \frac{3}{4}\Vert\beta_2\Vert_2^{-\frac13} \bar\Delta^{\frac43} + c_{N0}+c_s$. By Young's inequality
\begin{align}\label{y-iota}
  &\bar c_{y2}\|\bm y_2\|^2
\le \frac{\Vert\beta_2\Vert_2}{4}\|\bm y_2\|^4 + \frac{\bar c_{y2}^2}{\Vert \beta_2 \Vert_2}, \nonumber\\
&\bar \sigma_{5}\|\bm y_1\|^2
\le \frac{\Vert\beta_1\Vert_2}{4}\|\bm y_1\|^4 + \frac{\bar \sigma_{5}^2}{\Vert \beta_1 \Vert_2}, \nonumber
\end{align}
\begin{align}
&\bar \sigma_{6}\|\bm\iota_1\|^2
\le \frac{\Vert c_1 \Vert_2}{4}\|\bm\iota_1\|^4 + \frac{\bar \sigma_{6}^2}{\Vert c_1 \Vert_2},\nonumber\\
&\bar \sigma_{7}\|\bm\iota_2\|^2
\le \frac{\Vert c_2 \Vert_2}{4}\|\bm\iota_2\|^4 + \frac{\bar \sigma_{7}^2}{\Vert c_2 \Vert_2}.
\end{align}

Then, construct the Lyapunov function candidate $V$ as
\begin{equation}\label{eq:V-comp}
V = V_2 + \frac{1}{2}\sum_{i=1}^{2}\bm\iota_i^{\top}\bm\iota_i,
\end{equation}
\noindent
its time derivative is calaulated as follows
\begin{equation}
  \label{dot-V}
\dot V = \dot V_2 + \sum_{i=1}^{2} \bm\iota_i^{\top}\dot{\bm\iota}_i .
\end{equation}

\noindent
Using the compensator dynamics,
one obtains
\begin{align}
  \label{V-iota}
\sum_{i=1}^{2}\bm\iota_i^{\top}\dot{\bm\iota}_i
&\le \!
- \Vert c_1 \Vert_2\left( \bm\iota_1^{\top}\bm\iota_1\right)^{\!2}
\!-\! \Vert h  \Vert_2 \bm\iota_1^{\top} \bm\iota_1
\!-\! \Vert k_1 \Vert_2 \bm\iota_1^{\top}\Theta(\bm\iota_1)
 \nonumber \\
& \quad 
\!-\! \Vert c_2 \Vert_2(\bm\iota_2^{\top}\bm\iota_2)^{2}
\!-\! \Vert k_2 \Vert_2(\bm\iota_2^{\top}\bm\iota_2)^{p}
\!+\! \bm\iota_2^{\top} \hat{\bm f}_d(\bm x_d) \nonumber\\
& \quad \!+\! \left|\bm\iota_1^{\top}\bar{\bm\xi}\right|.
\end{align}
Based on the boundness of $\bm x_d, \hat{V}_i$, and $\bm\phi(\cdot)$, there exists a positive constant \(\bar{f}_d\) such that
\begin{align}
\left|\hat{\bm f}_d(\bm x_d)\right| \le \bar{f}_d,
\end{align}
and by Young's inequality,
\begin{align}
  &\left|\bm\iota_1^{\top}\bar{\bm\xi}\right| \le \| h \|_2 \bm\iota_1^\top \bm\iota_1 + \frac{1}{4 \|h\|_2}\bar{\bm\xi}^\top \bar{\bm\xi}, \\
  &\left|\bm\iota_2^{\top} \hat{\bm f}_d\right| \le \frac{\|c_2\|_2}{2}\|\bm\iota_2\|^4 + c_f,
\end{align}
where $c_f = \frac{3}{4} (2 \|c_2\|_2)^{-\frac13} \bar{f}_d^{\frac43}$. Then, \eqref{V-iota} can be further rearranged as 
\begin{align}
  \label{V-iota2}
&\sum_{i=1}^{2}\bm\iota_i^{\top}\dot{\bm\iota}_i  \nonumber\\
\le&
- \|c_1\|_2 \left( \bm\iota_1^{\top}\bm\iota_1\right)^{\!2}
- \| k_1 \|_2 \bm\iota_1^{\top}\Theta(\bm\iota_1) - \frac{\|c_2\|_2}{2}(\bm\iota_2^{\top}\bm\iota_2)^{2} \nonumber\\
& - \|k_2\|_2(\bm\iota_2^{\top}\bm\iota_2)^{p} + \frac{1}{4 \|h\|_2}\bar{\bm\xi}^\top \bar{\bm\xi} + c_f.
\end{align}
Substituting \eqref{eq:Vn-dot}, \eqref{y-iota} and \eqref{V-iota2} into \eqref{dot-V} yields
\begin{align}
\dot V
\leq& \!-\! \|\beta_{1}\|_2 \left(\bm y_1^\top \bm y_1 \right)^2\!-\! \|\gamma_{1}\|_2 \bm y_1^{\top} \Theta(\bm y_1) \!-\! \|\beta_{2}\|_2 \left(\bm y_2^\top \bm y_2 \right)^2  \nonumber\\
& \!-\! \|\gamma_{2}\|_2{\big(\bm y_2^{\top}\bm y_2\big)^{p}} \!\!+\!\! \left(\!\frac{\|\beta_2\|_2}{2} \!\!+\!\! \bar\sigma_{1}\!\right)\! \|\bm y_2\|^4 \!\!+\!\! \left(\! \frac{\|\beta_1\|}{4} \!\!+\!\! \bar\sigma_{2} \!\right) \!\|\bm y_1\|^4  \nonumber \\
&\!+\! \bar\sigma_{3}\|\bm \iota_1\|^4  \!+\! \frac{\|c_1\|_2}{4}\|\bm \iota_1\|^4
\!+\! \frac{\|c_2\|_2}{4}\|\bm \iota_2\|^4
\!-\! \|c_1\|_2 \!\left(\bm\iota_1^{\top}\bm\iota_1\right)^{\!2}  \nonumber\\
& \!-\! \|k_1\|_2\bm\iota_1^{\top}\Theta(\bm\iota_1)
-\frac{\|c_2\|_2}{2}(\bm\iota_2^{\top}\bm\iota_2)^{2} \!-\! \|k_2\|_2(\bm\iota_2^{\top}\bm\iota_2)^{p} \nonumber\\
&\!+\! \frac{\bar{\bm\xi}^\top \bar{\bm\xi}}{4 \|h\|_2} \!+\! c_f + \frac{\bar c_{y2}^2}{\beta_2}+ \frac{\bar \sigma_{5}^2}{\beta_1} + \frac{\bar \sigma_{6}^2}{c_1} + \frac{\bar \sigma_{7}^2}{c_2}\!+\! \bar c_0.
\label{eq:V-dot}
\end{align}

Next, the following two cases should be taken into account when dealing with the switch function.
\paragraph*{Case 1:}
\(\bm\iota_1^{\top}\bm\iota_1>\epsilon\) and \(\bm y_1^{\top}\bm y_1>\epsilon\).
In this region,
\[
\Theta(\bm s)=\frac{\bm s}{(\bm s^{\top}\bm s)^{1-p}} .
\]
Substituting \(\Theta(\cdot)\) into the bound of \(\dot V\) gives
\begin{align}
\dot V
\leq& - \sum_{i=1}^{2}\! \left(\! \|\beta_{i}\|_2\! \left(\bm y_i^\top \bm y_i \right)^2 \!\!+\! \|\gamma_{i}\|_2\! \left( \bm y_i^{\top} \bm y_i \right)^p \!\!+\! \|c_i\|_2 \!\left(\bm\iota_i^{\top}\bm\iota_i\right)^{2} \!\! \right. \nonumber\\
&\left. + k_i\!\left(\bm\iota_i^{\top}\bm\iota_i\right)^{p}\right) \!\!+\!\! \left(\!\frac{\|\beta_2\|_2}{2}\!\!+\!\!\bar\sigma_{1}\!\right)\!\|\bm y_2 \|^4 \!\!+\!\! \left( \!\frac{\|\beta_1\|_2}{4}\! \!+\! \!\bar\sigma_{2}\!\right)\! \|\bm y_1\|^4  \nonumber\\
& +\left(\! \frac{\|c_1\|_2}{4}\!+\!\bar\sigma_{3}\!\right)\|\bm \iota_1\|^4 \!+\! \frac{3 \|c_2\|_2}{4}\|\bm \iota_2 \|^4 \!+\! \frac{\bar{\bm\xi}^\top \bar{\bm\xi}}{4 \|h\|_2} \!+\! c_f \nonumber\\
& + \frac{\bar c_{y2}^2}{\beta_2}+ \frac{\bar \sigma_{5}^2}{\beta_1}+ \frac{\bar \sigma_{6}^2}{c_1} + \frac{\bar \sigma_{7}^2}{c_2}\!+\! \bar c_0.
\end{align}

\paragraph*{Case 2:}
\(\bm\iota_1^{\top}\bm\iota_1 \le \epsilon\) or \(\bm y_1^{\top}\bm y_1 \le \epsilon\). Let $a=1 \,(-1)$ represents $\bm\iota_1^{\top}\bm\iota_1 \le \epsilon \,(\bm\iota_1^{\top}\bm\iota_1 > \epsilon)$ and $b=1\,(-1)$ represents $\bm y_1^{\top}\bm y_1 \le \epsilon \,(\bm y_1^{\top}\bm y_1 > \epsilon)$. Use the smooth switch
\[
\Theta(\bm s)=
\begin{cases}
\dfrac{\bm s}{(\bm s^{\top}\bm s)^{1-p}}, & \bm s^{\top}\bm s>\epsilon,\\[6pt]
\dfrac{\bm s}{(\bm s^{\top}\bm s)^{1-p}}
-\cos^{2}\!\Big(\dfrac{\bm s^{\top}\bm s\,\pi}{2\epsilon}\Big)\,
  \dfrac{\bm s}{(\bm s^{\top}\bm s)^{1-p}}, & \bm s^{\top}\bm s\le \epsilon,
\end{cases}
\]
so that if \(\bm s^{\top}\bm s\le\epsilon\) then \(\bm s^{\top}\Theta(\bm s)\ge (\bm s^{\top}\bm s)^{p}-\epsilon^{p}\),
and if \(\bm s^{\top}\bm s>\epsilon\) we recover \emph{Case 1}. Hence
\begin{align}
  \label{V-dot}
\dot V
\leq& - \sum_{i=1}^{2}\! \left(\! \|\beta_{i}\|_2\! \left(\bm y_i^\top \bm y_i \right)^2 \!\!+\! \|\gamma_{i}\|_2\! \left( \bm y_i^{\top} \bm y_i \right)^p \!\!+\! \|c_i\|_2 \!\left(\bm\iota_i^{\top}\bm\iota_i\right)^{2} \!\! \right. \nonumber\\
&\left. + k_i\!\left(\bm\iota_i^{\top}\bm\iota_i\right)^{p}\right) \!\!+\!\! \left(\!\frac{\|\beta_2\|_2}{2}\!+\!\bar\sigma_{1}\!\right)\!\|\bm y_2 \|^4 \!\!+\!\! \left( \!\frac{\|\beta_1\|_2}{4}\! + \!\bar\sigma_{2}\!\right)\! \|\bm y_1\|^4  \nonumber\\
& +\left(\! \frac{\|c_1\|_2}{4}\!+\!\bar\sigma_{3}\!\right)\|\bm \iota_1\|^4 \!+\! \frac{3 \|c_2\|_2}{4}\|\bm \iota_2 \|^4 \!+\! \frac{\bar{\bm\xi}^\top \bar{\bm\xi}}{4 \|h\|_2} \!+\! c_f \nonumber\\
& + \frac{\bar c_{y2}^2}{\beta_2}+ \frac{\bar \sigma_{5}^2}{\beta_1}+ \frac{\bar \sigma_{6}^2}{c_1} + \frac{\bar \sigma_{7}^2}{c_2}\!+\! \bar c_0 + \frac{a+1}{2}k_1 \epsilon^p \!+\! \frac{b+1}{2}\gamma_1 \epsilon^p \nonumber \\
\le & - \sum_{i=1}^{2} \left(4\|\beta_{i}\|_2 \!\left(\!\frac{1}{2}\bm y_i^\top \bm y_i \!\right)^2 \!+\! 2^p \|\gamma_{i}\|_2 \!\left(\!\frac{1}{2} \bm y_i^{\top} \bm y_i \!\right)^p  \right. \nonumber\\
& \quad \left. \!+ 4 \|c_i\|_2\!\left(\!\frac{1}{2}\bm\iota_i^{\top}\bm\iota_i\!\right)^{2} \!+\! 2^p \|k_i\|_2 \!\!\left(\frac{1}{2} \bm\iota_i^{\top}\bm\iota_i\right)^{p} \right)\! \nonumber\\
&\quad +\! \left(\!\frac{\|\beta_2\|_2}{2}\!+\!\bar\sigma_{1}\!\right)\!\! \| \bm y_2\! \|^4 \!+\! \left( \!\frac{\|\beta_1\|_2}{4}\! + \!\bar\sigma_{2}\!\right)\!\! \| \bm y_1 \!\|^4  \nonumber \\
& \quad +\left(\! \frac{\|c_1\|_2}{4}\!+\!\bar\sigma_{3}\!\right)\|\bm \iota_1\|^4 +\! \frac{3\|c_2\|_2}{4}\|\bm \iota_2 \|^4 \!+\! \frac{\bar{\bm\xi}^\top \bar{\bm\xi}}{4 \|h\|_2} \!+\! c_f \nonumber \\
&\quad + \frac{\bar c_{y2}^2}{\|\beta_2\|_2} + \frac{\bar \sigma_{5}^2}{\|\beta_1\|_2}+ \frac{\bar \sigma_{6}^2}{\|c_1\|_2} + \frac{\bar \sigma_{7}^2}{\|c_2\|_2}\!+\! \bar c_0 + \| k_1 \|_2 \epsilon^p \nonumber \\
&\quad + \|\gamma_1\|_2 \epsilon^p.
\end{align}

Above all, it can be concluded that \eqref{V-dot} always hold. From Lemma 3, one obtain that
\begin{equation}\label{eq:Vdot_ft}
\dot V \le -l V^{p} - m V^{2} + n ,
\end{equation}
with
\begin{align}
l& \coloneqq
\min\left\{2^{p} \|k_1\|_2, 2^{p}\|\gamma_{1}\|_2, 2^{p}\|k_2\|_2, 2^{p}\|\gamma_{2}\|_2\right\}, \nonumber \\
m &\coloneqq
\min\left\{\frac{3 \|c_1\|_2}{4}\! - \!\bar\sigma_{3}, \frac{3 \|\beta_1\|_2}{4}\! - \!\bar\sigma_{2}, \frac{\|c_2\|_2}{4}, \frac{\|\beta_{2}\|_2}{2}-\bar\sigma_1 \right\}, \nonumber \\
n& \coloneqq
\frac{\bar{\bm\xi}^\top \bar{\bm\xi}}{4 \|h \|_2} \!+\! c_f \!+\! \frac{\bar c_{y2}^2}{\|\beta_2\|_2} \!+\! \frac{\sigma_{s2}^2}{\|c_2\|_2} \!+\! \bar c_0 \!+\! \|k_1\|_2 \epsilon^p \!+\! \|\gamma_1\|_2 \epsilon^p. 
\end{align}
By \cite{proof}, one has
\begin{equation}\label{eq:V_bound}
V(\varsigma)\;\le\;
\min\!\left\{
\left(\frac{n}{(1-\omega)l}\right)^{\!1/p},
\;
\left(\frac{n}{(1-\omega)m}\right)^{\!1/2}
\right\},
\end{equation}
with $0<\omega<1$, and the settling time $T$ meets 
\begin{equation}\label{eq:T_bound}
T \;\le\; \frac{1}{l}\,\frac{1}{\omega(1-p)} \;+\; \frac{1}{\omega m}.
\end{equation}
\end{proof}

\section{Experimental Results}\label{sec:exp}
\begin{figure}[htbp]
  \centering
  \includegraphics[width=3.4in]{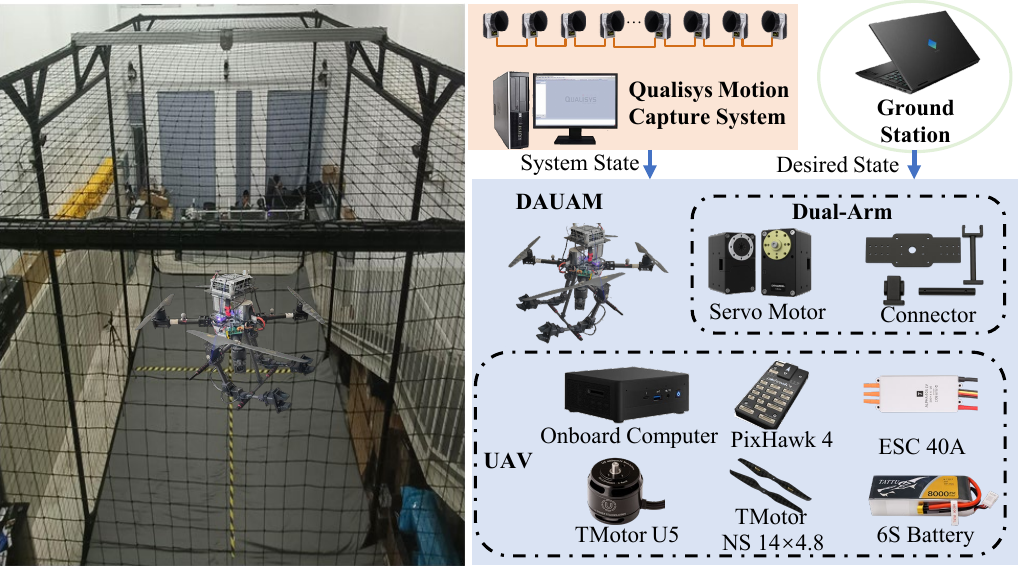}
  \caption{Hardware experimental platform.}
  \label{fig:tb}
  \vspace{-0.2cm}
\end{figure}

\subsection{Experimental platform}
The hardware testbed of the DAUAM system is illustrated in Fig. \ref{fig:tb}. It consists of a multirotor platform carrying an onboard computer and a lightweight dual-arm robotic manipulator. The motion capture system is composed of 14 Qualisys cameras, which provide the multirotor's position and translational velocity to the ground station via a LAN (Local Area Network) using a ROS (Robot Operating System)-based communication interface. A PixHawk flight controller is connected to the onboard computer through the MAVROS middleware, enabling bidirectional exchange of state and command data. The onboard NUC computer and the ground workstation operate on 64-bit Ubuntu 20.04 and 64-bit Ubuntu 18.04, respectively. The control algorithm is executed on the ground workstation with an overall update rate of 100~Hz, and the resulting control commands are sent to the onboard computer over a 5G-band WIFI link. For the dual-arm manipulator, Dynamixel AX- and XM-series actuators are employed.
The dual-arm joints are directly driven by the onboard computer. The main physical parameters of the overall experimental platform are summarized as follows:
\begin{align}
m_t&=4.85~\mathrm{kg}, g=9.8~\mathrm{m/s^2},  \nonumber \\
L_1 &= 0.15~\mathrm{m}, L_2 = 0.05~\mathrm{m}, L_3 = 0.17~\mathrm{m}. \nonumber 
\end{align}
In practical test, the control gains are set as follows:
\begin{align}
&c_1=\mathrm{diag} \left( \left[1.0, 1.0, 1.2 \right] \right), k_1=\mathrm{diag} \left( \left[1.0, 1.0, 1.2 \right] \right), \nonumber\\
&\beta_1=\mathrm{diag} \left( \left[1.0, 1.0, 1.2 \right] \right), \gamma_1=\mathrm{diag} \left( \left[1.0, 1.0, 1.2 \right] \right), \nonumber \\
&c_2=\mathrm{diag} \left( \left[1.0, 1.0, 1.2 \right] \right), k_2=\mathrm{diag} \left( \left[0.8, 0.8, 1.2 \right] \right), \nonumber\\
&\beta_2=\mathrm{diag} \left( \left[0.6, 0.6, 0.7 \right] \right), \gamma_2=\mathrm{diag} \left( \left[1.0, 1.0, 1.2 \right] \right),  \nonumber \\
&k=\mathrm{diag} \left( \left[3.0, 3.0, 4.0 \right] \right), p =0.75, \nonumber \\
&\epsilon = 1.0 \times 10^{-4}, \varepsilon=0.3, \zeta=0.5, \rho=0.1, \nonumber \\
&\sigma=0.05, \delta=0.1, N_0=3, N_1=4, N_3=3, \nonumber \\
&\Gamma_0 = \mathrm{diag} \left( \left[100, 100, 100, 100 \right] \right), \nonumber \\
&\Gamma_1 = \mathrm{diag} \left( \left[100, 100, 100, 100, 100 \right] \right). \nonumber
\end{align}
The control gains of the baseline controller $\bm U_c = - K_p \bm e_1 - K_i \int_0^t \bm e_1 - K_d \dot{\bm e}_1 + m_t \ddot{\bm p}_d$ are set as follows:
\begin{align}
K_p &= \mathrm{diag} \left( \left[8.0, 8.0, 10.0 \right] \right), K_i = \mathrm{diag} \left( \left[1.5, 1.5, 8.0 \right] \right), \nonumber \\
K_d &= \mathrm{diag} \left( \left[10.0, 10.0, 13.0 \right] \right). \nonumber 
\end{align}


\subsection{Experimental Results}
\vspace{-0.2cm}
\subsubsection{Experiment 1 -- Elliptic Trajectory Tracking}
This experiment evaluates the performance of the proposed controller in tracking an elliptic reference trajectory while the dual-arm is moving. The corresponding results are summarized in Fig.~\ref{fig:exp1} and Table~\ref{tab:error}. Specifically, Fig.~\ref{fig:exp1}a depicts the multirotor position, while Fig.~\ref{fig:exp1}b shows the control inputs in three directions, and the NN-based approximation of the friction term is provided in Fig.~\ref{fig:exp1}c. The maximum and mean position errors of the multirotor, as well as the percentages of error reduction of the proposed ET-NN method compared with the baseline method in the three directions, are summarized in Table~\ref{tab:error}. It is evident that both control methods can stabilize the system near the desired trajectory, with the proposed method exhibiting noticeably better trajectory tracking accuracy.

\begin{figure*}[htbp]
  \centering
  \clearcaptionsetup{figure}
  \clearcaptionsetup{subfloat}
  \captionsetup[subfloat]{labelsep=period}
  \subfloat[\small Multirotor position.]{\includegraphics[width=0.30\textwidth]{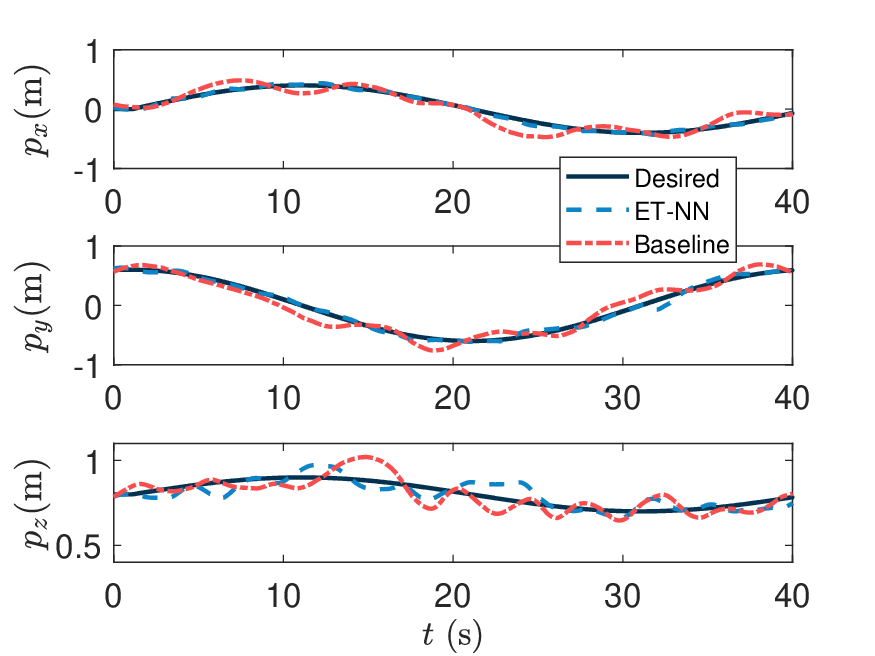}}\hspace{0.0mm} \vspace{-0.2cm}
  \captionsetup[subfloat]{labelsep=period}
  \subfloat[\small Force inputs.]{\includegraphics[width=0.30\textwidth]{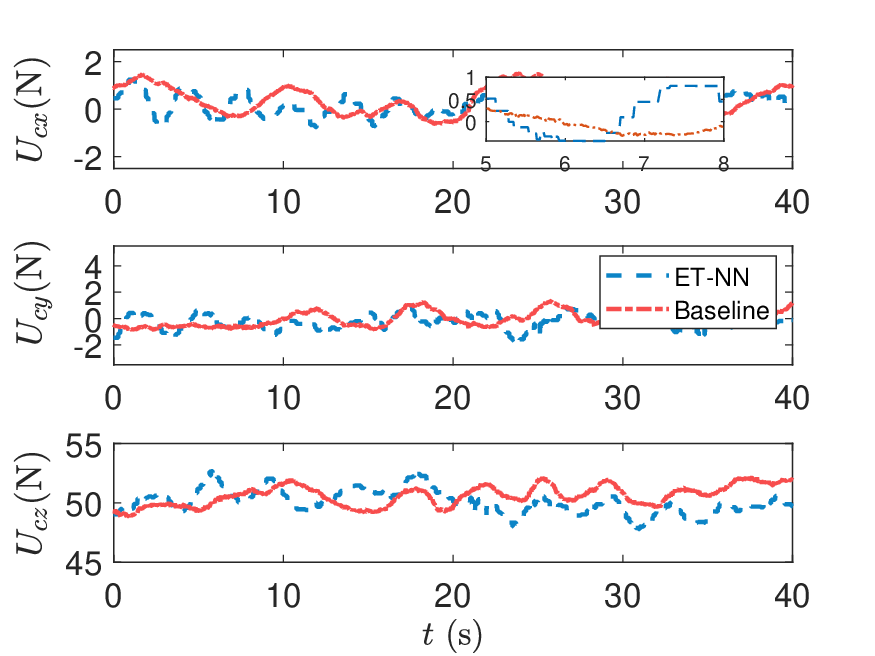}}\hspace{0.0mm}
  \captionsetup[subfloat]{labelsep=period}
  \subfloat[\small NN outputs.]{\includegraphics[width=0.30\textwidth]{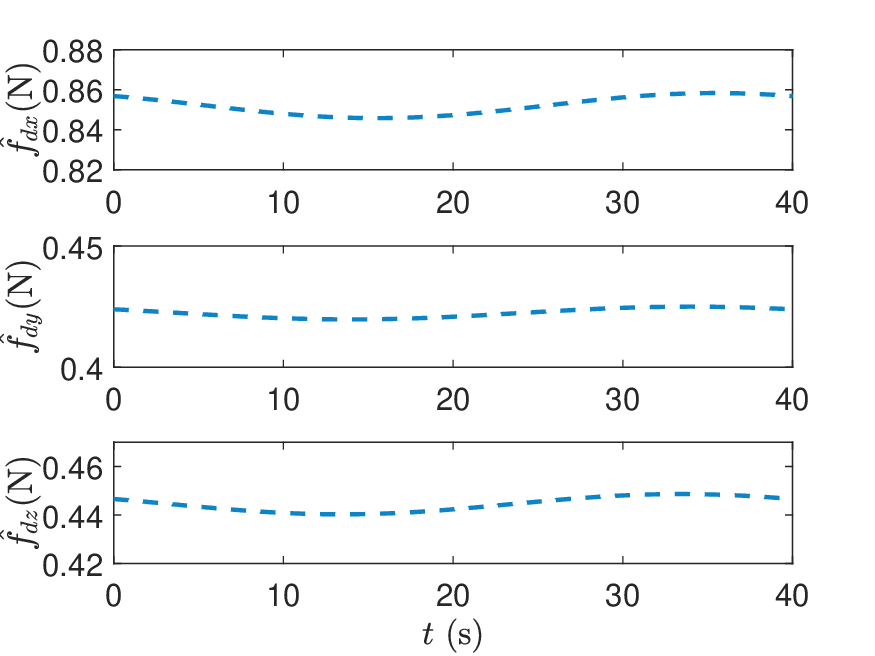}}\hspace{0.0mm}
  \caption{Results of \emph{Experiment 1}.}
  \vspace{-0.2cm}
  \label{fig:exp1}
\end{figure*}

\begin{table}[htbp]\scriptsize
  \centering
  \captionsetup{font={small}}
  \caption{Position Errors of \emph{Experiment 1/2}}
  \renewcommand\arraystretch{1.45}
  \resizebox{\linewidth}{!}{%
  \begin{tabular}{c|c|ccc|ccc}
    \hline\hline
    \multirow{2}{*}{Error} & \multirow{2}{*}{Method} &
    \multicolumn{3}{c|}{\emph{Experiment 1}} &
    \multicolumn{3}{c}{\emph{Experiment 2}} \\
    & & $x~(\mathrm{m})$ & $y~(\mathrm{m})$ & $z~(\mathrm{m})$
      & $x~(\mathrm{m})$ & $y~(\mathrm{m})$ & $z~(\mathrm{m})$ \\ \hline
    \multirow{3}{*}{Max} & ET-NN &
      $\mathbf{0.0663}$ & $\mathbf{0.1672}$ & $\mathbf{0.1151}$ &
      $\mathbf{0.0600}$ & $\mathbf{0.0864}$ & $\mathbf{0.1437}$ \\ \cline{2-8}
    & Baseline &
      $0.2383$ & $0.2510$ & $0.1464$ &
      $0.1876$ & $0.2690$ & $0.2035$ \\ \cline{2-8}
    & Reduced &
      $\mathbf{72.18\%}$ & $\mathbf{33.52\%}$ & $\mathbf{21.39\%}$ &
      $\mathbf{68.20\%}$ & $\mathbf{67.88\%}$ & $\mathbf{29.39\%}$ \\ \hline
    \multirow{3}{*}{Mean} & ET-NN &
      $\mathbf{0.0172}$ & $\mathbf{0.0316}$ & $\mathbf{0.0378}$ &
      $\mathbf{0.0190}$ & $\mathbf{0.0295}$ & $\mathbf{0.0400}$ \\ \cline{2-8}
    & Baseline &
      $0.0841$ & $0.0773$ & $0.0430$ &
      $0.0593$ & $0.0818$ & $0.0673$ \\ \cline{2-8}
    & Reduced &
      $\mathbf{79.57\%}$ & $\mathbf{59.08\%}$ & $\mathbf{12.09\%}$ &
      $\mathbf{68.00\%}$ & $\mathbf{63.96\%}$ & $\mathbf{40.52\%}$ \\ \hline
  \end{tabular}%
  }
  \label{tab:error}
\end{table}
\vspace{-0.2cm}

\subsubsection{Experiment 2 -- Figure-Eight Trajectory Tracking}
To further investigate the robustness of the proposed controller under different reference motions, a second experiment is conducted in which the multirotor is required to track a figure-eight trajectory. The experimental results are provided in Fig.~\ref{fig:exp2} and Table~\ref{tab:error}. As shown in Fig.~\ref{fig:exp2}a, the proposed method maintains accurate tracking of the figure-eight curve, whereas the baseline controller suffers from larger position deviations. The corresponding control inputs and NN outputs are presented in Fig.~\ref{fig:exp2}b and Fig.~\ref{fig:exp2}c, respectively. The illustrative video of experimental records is available at https://youtu.be/p55-FsldKbo.

\begin{figure*}[htbp]
  \centering
  \clearcaptionsetup{figure}
  \clearcaptionsetup{subfloat}
  \captionsetup[subfloat]{labelsep=period}
  \subfloat[\small Multirotor position.]{\includegraphics[width=0.30\textwidth]{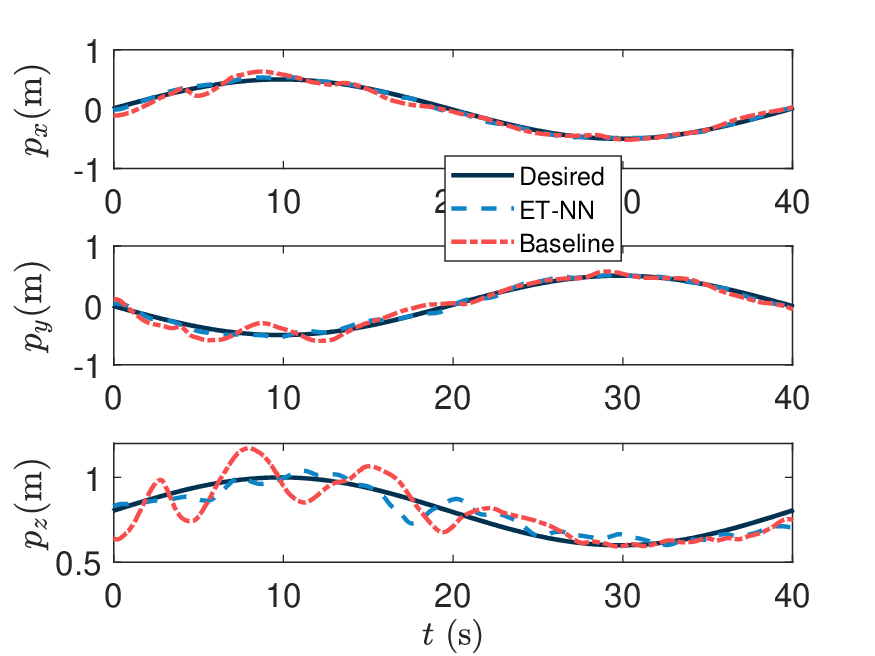}}\hspace{0.0mm}\vspace{-0.2cm}
  \captionsetup[subfloat]{labelsep=period}
  \subfloat[\small Force inputs.]{\includegraphics[width=0.30\textwidth]{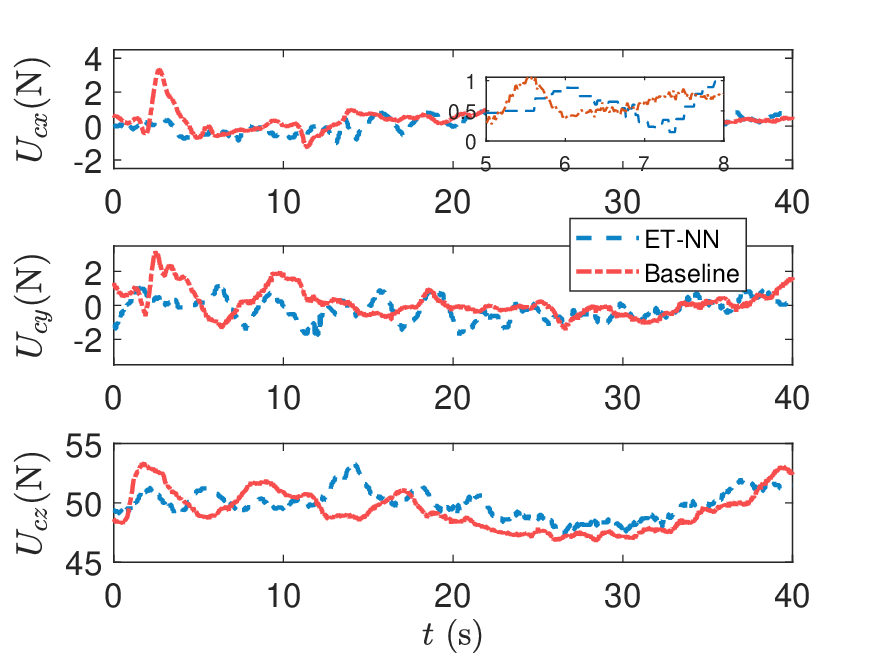}}\hspace{0.0mm}
  \captionsetup[subfloat]{labelsep=period}
  \subfloat[\small NN outputs.]{\includegraphics[width=0.30\textwidth]{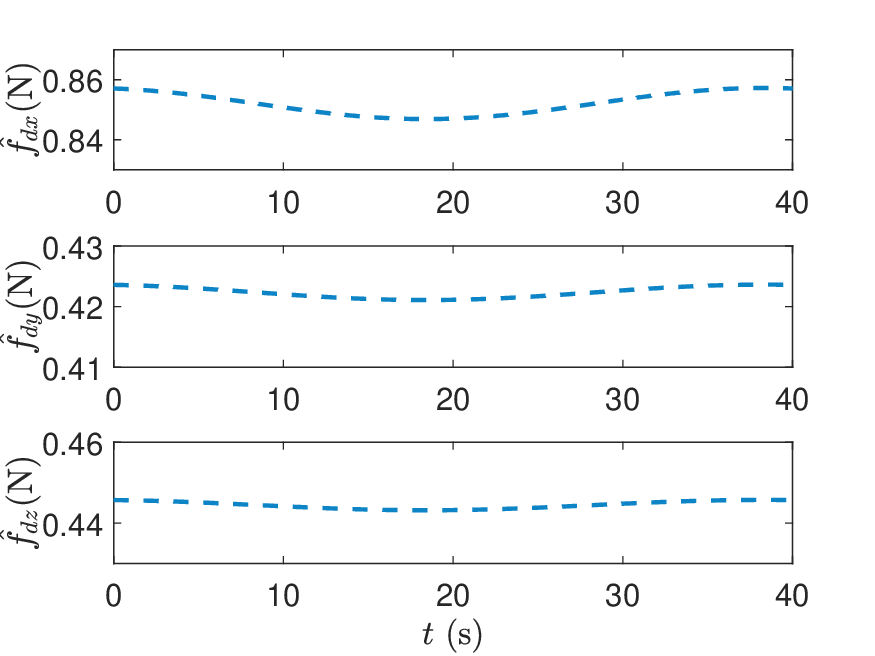}}\hspace{0.0mm}
  \caption{Results of \emph{Experiment 2}.}
  \vspace{-0.2cm}
  \label{fig:exp2}
\end{figure*}

\section{Conclusion}\label{sec:con}
\vspace{-0.2cm}
The paper address the trajectory tracking control of a DAUAM under strong arm-platform coupling, lumped uncertainties, and limited communication resources. An adaptive event-triggered controller based on command-filtered backstepping and neural network approximation is developed to avoid the explosion of complexity, compensate unknown dynamics, and reduce control-update frequency. Lyapunov-based analysis established boundedness of all closed-loop signals and fixed-time convergence of the tracking error to a small neighborhood of the desired trajectory, which is further corroborated by experiments on a self-built DAUAM platform. Future research directions include extending the framework to contact-rich manipulation tasks and cooperative multi-UAV manipulation scenarios.

\bibliography{ifacconf}             
                                                   







\end{document}